\title{Distributed Multitask Learning}
\author{
Jialei Wang \\
Department of Computer Science\\
University of Chicago\\
\texttt{jialei@uchicago.edu} 
\and
Mladen Kolar \\
Booth School of Business  \\
University of Chicago \\
\texttt{mkolar@chicagobooth.edu} 
\and
Nathan Srebro \\
Toyota Technological Institute at Chicago \\
\texttt{nati@ttic.edu} 
}
\newcommand{\removed}[1]{}
\begin{document}

\maketitle

\begin{abstract}
  We consider the problem of distributed multi-task learning, where
  each machine learns a separate, but related, task.  Specifically,
  each machine learns a linear predictor in high-dimensional space,
  where all tasks share the same small support.  We present a
  communication-efficient estimator based on the debiased lasso and
  show that it is comparable with the optimal centralized method.
\end{abstract}

\section{Introduction}

Learning multiple tasks simultaneously allows transferring information
between related tasks and for improved performance compared to
learning each tasks separately \citep{caruana1997multitask}.
%\removed{ando2005mtl,argyriou08convex}
It has been successfully exploited in, e.g., spam filtering
\citep{weinberger2009feature}, web search \citep{chapelle2010multi},
disease prediction \citep{zhou2013modeling} and eQTL mapping
\citep{kim2010tree}.

Tasks could be related to each other in a number of ways.  In this
paper, we focus on the high-dimensional multi-task setting with joint
support where a few variables are related to all tasks, while others
are not predictive
\citep{tOPTUrlach2005simultaneous,obozinski10support,Lounici2011Oracle}.
%\removed{zou08finfinity,zhang06probabilistic, zhao2009icap,
%  Lounici2011Oracle, Huang2010benefit, obozinski10support,
%  kolar11union, han09blockwise, gong2013multi, liu2014multi}. 
The standard approach is to use the mixed $\ell_1/\ell_2$ or
$\ell_1/\ell_\infty$ penalty, as such penalties encourage selection of
variables that affect all tasks. Using a mixed norm penalty leads to
better performance in terms of prediction, estimation and model
selection compared to using the $\ell_1$ norm penalty, which is
equivalent to considering each task separately.

Shared support multi-task learning is generally considered in a
centralized setting where data from all tasks is available on a single
machine, and the estimator is computed using a standard single-thread
algorithm.  With the growth of modern massive data sets, there is a
need to revisit multi-task learning in a distributed setting, where
tasks and data are distributed across machines and communication is
expensive.  In particular, we consider a setting where each machine
holds one ``task'' and its related data.

We develop an efficient distributed algorithm for multi-task learning
that exploits shared sparsity between tasks. Our algorithm (DSML)
requires only one round of communication between the workers and the
central node, involving each machine sending a vector to the central
node and receiving back a support set.  Despite the limited
communication, our algorithm enjoys the same theoretical guarantees,
in terms of the leading term in reasonable regimes and mild
conditions, as the centralized approach.
%Specifically, each node locally solves the problem using the debiased
%lasso \citep{Javanmard2013Confidence} and communicates the estimated %parameter vector. 
Table \ref{tab:comparison_sparsistency} summarizes our support
recovery guarantees compared to the centralized (group lasso) and local
(lasso) approaches, while Table \ref{tab:comparison_estimation}
compares the parameter and prediction error guarantees.

\begin{table}[htpb]
\begin{center}
\begin{footnotesize}
\newcommand{\tabincell}[2]{\begin{tabular}{@{}#1@{}}#2\end{tabular}}
\begin{tabular}{c|c|c|c|c}\hline
Approach & Communication & Assumptions  & Min signal strength & Strength type \\\hline
Lasso &0 &\tabincell{c}{Mutual Incoherence \\Sparse Eigenvalue}  & $\sqrt{\frac{\log p}{n}}$ & Element-wise\\\hline
Group lasso &$\Ocal(np)$  & \tabincell{c}{Mutual Incoherence \\Sparse Eigenvalue}  &$\sqrt{\frac{1}{n}\rbr{1 + \frac{\log p}{m}}}$ &Row-wise \\\hline 
DSML & $\Ocal(p)$  &\tabincell{c}{Generalized Coherence \\ Restricted Eigenvalue}   &$\sqrt{\frac{1}{n}\rbr{1 + \frac{\log p}{m}}} + \frac{|S|\log p}{n}$ &Row-wise \\\hline
\end{tabular}
\small \caption{\small Lower bound on coefficients required to
  ensure support recovery with $p$ variables, $m$ tasks, $n$
  samples per task and a true support of size $|S|$.}
\label{tab:comparison_sparsistency}
\end{footnotesize}
\end{center}
\end{table}

\begin{table}[htpb]
\begin{footnotesize}
\newcommand{\tabincell}[2]{\begin{tabular}{@{}#1@{}}#2\end{tabular}}
\begin{center}
\begin{tabular}{c|c|c|c}\hline
Approach  & Assumptions & $\ell_1/\ell_2$ estimation error & Prediction error \\\hline
Lasso  &Restricted Eigenvalue & $ \sqrt{\frac{|S|^2 \log p}{n}}$ & $\frac{|S| \log p}{n}$ \\\hline
Group lasso   & Restricted Eigenvalue & $\frac{|S|}{\sqrt{n}} \sqrt{1 + \frac{\log p}{m}}$ &$\frac{|S|}{n} \rbr{1 + \frac{\log p}{m}}$ \\\hline 
DSML  &\tabincell{c}{Generalized Coherence \\ Restricted Eigenvalue}  & $\frac{|S|}{\sqrt{n}} \sqrt{1 + \frac{\log p}{m}} + \frac{|S|^2 \log p}{n}$ &$\frac{|S|}{n} \rbr{1 + \frac{\log p}{m}} + \frac{|S|^3 (\log p)^2}{n^2}$ \\\hline
\end{tabular}
\end{center}
\small \caption{\small Comparison of parameter estimation errors and
  prediction errors. The DSML guarantees improve over Lasso and have
  the same leading term as the Group lasso as long as $m<n/(|S|^2\log
  p)$.}
\label{tab:comparison_estimation}
\end{footnotesize}
\end{table}

\section{Distributed Learning and Optimization}
\label{section:distributed_learning}

With the increase in the volume of data used for machine learning, and
the availability of distributed computing resources, distributed
learning and the use of distributed optimization for machine learning
has received much attention.

Most work on distributed optimization focuses on ``consensus
problems'', where each machine holds a {\em different} objective
$f_i(\beta)$ and the goal is to communicate between the machines so as
to jointly optimize the average objective $1/m \sum_i f_i(\beta)$,
that is,~to find a single vector $\beta$ that is good for all local
objectives \citep{boyd2011distributed}.  The difficulty of consensus
problems is that the local objectives might be rather different, and,
as a result, one can obtain lower bounds on the amount of
communication that must be exchanged in order to reach a joint
optimum.  In particular, the problem becomes harder as more machines
are involved.

The consensus problem has also been studied in the stochastic setting
\citep{ram2010distributed}, in which each machine receive stochastic
estimates of its local objective.  Thinking of each local objective as
a generalization error w.r.t.~a local distribution, we obtain the
following distributed learning formulation
\citep{DBLP:journals/jmlr/BalcanBFM12}: each machine holds a different
source distribution $\Dcal_i$ from which it can sample, and this
distribution corresponds to a different local generalization error
$f_i=\EE_{(X,y)\sim\Dcal_i}[\textit{loss}(\beta,X,y)]$.  The goal is
to find a single predictor $\beta$ that minimizes the average
generalization error, based on samples sampled at the local nodes.
Again, the problem becomes harder when more machines are involved and
one can obtain lower bounds on the amount of communication
required---\citep{DBLP:journals/jmlr/BalcanBFM12} carry out such an
analysis for several hypothesis classes.

A more typical situation in machine learning is one in which there is
only a single source distribution $\Dcal$, and data from this single
source is distributed randomly across the machines (or equivalently,
each machine has access to the same source distribution
$\Dcal_i=\Dcal$).  Such a problem can be reduced to a consensus
problem by performing consensus optimization of the empirical errors
at each machine. However, such an approach ignores several issues:
first, the local empirical objectives are not arbitrarily different,
but rather quite similar, which can and should be taken advantage of
in optimization \citep{DANE}.  Second, since each machine has access to
the source distribution, there is no lower bound on communication---an
entirely ``local'' approach is possible, were each machine completely
ignores other machines and just uses its own data.  In fact,
increasing the number of machines only makes the problem easier (in
that it can reduce the runtime or number of samples per machine
required to achieve target performance), as additional machines can
always be ignored.  In such a setting, the other relevant baseline is
the ``centralized'' approach, where all data is communicated to a
central machine which computes a predictor centrally.  The goal here
is then to obtain performance close to that of the ``centralized''
approach (and much better than the ``local'' approach), using roughly
the same number of samples, but with low communication and computation
costs.  Such single-source distributed problems have been studied both
in terms of predictive performance
\citep{shamir2014distributed,DBLP:conf/nips/JaggiSTTKHJ14} and
parameter estimation
\citep{DBLP:journals/jmlr/ZhangDW13,DBLP:conf/nips/ZhangDJW13,lee2015communication}.

In this paper we suggest a novel setting that combines aspects of the
above two extremes.  On one hand, we assume that each machine has a
different source distributions $\Dcal_i(X,y)$, corresponding to a
different task, as in consensus problems and in
\citep{DBLP:journals/jmlr/BalcanBFM12}.  For example, each machine
serves a different geographical location, or each is at a different
hospital or school with different characteristics.  But if indeed
there are differences between the source distributions, it is natural
to learn different predictors $\beta_i$ for each machine, so that
$\beta_i$ is good for the distribution typical to that machine.  In
this regard, our distributed multi-task learning problem is more
similar to single-source problems, in that machines could potentially
learn on their own given enough samples and enough time. Furthermore,
availability of other machines just makes the problem easier by
allowing transfer between the machine, thus reducing the sample
complexity and runtime.  The goal, then, is to leverage as much
transfer as possible, while limiting communication and runtime.  As
with single-source problems, we compare our method to the two
baselines, where we would like to be much better than the ``local''
approach, achieving performance nearly as good as the ``centralized''
approach, but with minimal communication and efficient runtime.

To the best of our knowledge, the only previous discussion of
distributed multi-task learning is \citep{dinuzzo2011client}, which
considered a different setting with an almost orthogonal goal: a
client-server architecture, where the server collects data from
different clients, and send sufficient information that might be
helpful for each client to solve its own task. Their emphasis is on
preserving privacy, but their architecture is communication-heavy as
the entire data set is communicated to the central server, as in the
``centralized'' bases line.  On the other hand, we are mostly
concerned with communication costs, but, for the time being, do not
address privacy concerns.

\section{Preliminaries}

We consider the following multi-task linear regression model with $m$ tasks:
\begin{equation}
  \label{eq:model}
  y_t = X_t \beta^*_t + \epsilon_t,\qquad
  t=1,\ldots,m,
\end{equation}
where $X_t \in \RR^{n_t \times p}$, $y_t \in \RR^{n_t}$, and
$\epsilon_t \sim N(0, \sigma_t^2 I) \in \RR^{n_t}$ is a noise vector,
and $ \beta^*_t$ is the unknown vector of coefficients for the task
$t$. For notation simplicity we assume each task has equal sample size
and the same noise level, that is, we assume, $n_1 = n_2 = \ldots = n$
and $\sigma_1 = \sigma_2 = \ldots = \sigma$.  We will be working in a
high-dimensional regime with $p$ possibly larger than $n$, however, we
will assume that each $\beta^*_t$ is sparse, that is, few components
of $\beta^*_t$ are different from zero. Furthermore, we assume that
the support between the tasks is shared. In particular, let $S_t =
{\rm support}(\beta^*_t) = \{j \in [p] : \beta_{tj} \neq 0 \}$, with
$S_1 = S_2 = \ldots = S$ and $s = |S| \ll n$. Suppose the data sets
$(X_1,y_1),\ldots,(X_m,y_m)$ are distributed across machines, our goal
is to estimate $\{\beta_t^*\}_{t=1}^m$ as accurately as possible,
while maintaining low communication cost.

The lasso estimate for each task $t$ is given by:
\begin{align}
\label{eqn:local_lasso}
\hat \beta_t = 
\arg \min_{\beta_t} \frac{1}{n} 
\norm{ y_t - X_t \beta_t }_2^2 + \lambda_t \norm{\beta_t}_1.
\end{align}
The multi-task estimates are given by the joint optimization:
\begin{align}
\label{eq:group_lasso}
\{ \hat \beta_t \}_{t=1}^m = 
\arg \min_{ \{\beta_t\}_{t=1}^m } 
\frac{1}{mn} \sum_{t=1} \norm{y_t - X_t \beta_t}_2^2 
+ \lambda \pen(\{\beta_t\}_{t=1}^m),
\end{align}  
where $\pen(\{\beta_t\}_{t=1}^m)$ is the regularizaton that promote
group sparse solutions. For example, the group lasso penalty uses
$\pen(\{\beta_t\}_{t=1}^m) = \sum_{j \in [p]} \sqrt{\sum_{t \in m}
  \beta_{tj}^2}$ \citep{Yuan2006Model}, while the iCAP uses
$\pen(\{\beta_t\}_{t=1}^m) = \sum_{j \in [p]}
\max_{t=1,\ldots,m} |\beta_{tj}| $
\citep{zhao2009icap}\removed{han09blockwise}.  
In a distributed setting, one could potentially minimize
\eqref{eq:group_lasso} using a distributed consensus procedure (see
Section \ref{section:distributed_learning}), but such an approach
would generally require multiple round of communication.  Our
procedure, described in the next section, lies in between the local
lasso \eqref{eqn:local_lasso} and centralized estimate
\eqref{eq:group_lasso}, requiring only one round of communication to
compute, while still ensuring much of the statistical benefits of
using group regularization.

\section{Methodology}

In this section, we detail our procedure for performing estimation
under model in \eqref{eq:model}. Algorithm \ref{alg:dsml} provides an
outline of the steps executed by the worker nodes and the master node,
which are explained in details below.

\begin{algorithm}[hpt]
\SetAlgoLined
\underline{\textbf{Workers:}}\\
\For{$t=1, 2, \ldots, m$}{
Each worker obtains $\hat \beta_t$ as a solution to a local lasso in \eqref{eqn:local_lasso}\;
Each worker obtains $\hat \beta_t^u$ the debiased lasso estimate  in
\eqref{eqn:debiasing} and sends it to the master\;
\If{Receive $\hat S(\Lambda)$ from the master}{
Calculate final estimate $\tilde \beta_t$ in \eqref{eq:local_thr}.
}
}
\underline{\textbf{Master:}}\\
\If{Receive $\{\hat \beta^u_t\}_{t=1}^m$ from all workers}{
Compute $\hat S(\Lambda)$ by group hard thresholding
in\eqref{eq:hard_thr} and send the result back to every worker.
}
\caption{DSML:Distributed debiased Sparse Multi-task Lasso.}
\label{alg:dsml}
\end{algorithm}

Recall that each worker node contains data for one task. That is, a
node $t$ contains data $(X_t, y_t)$.  In the first step, each worker
node solves a lasso problem locally, that is, a node $t$ minimizes the
program in \eqref{eqn:local_lasso} and obtains $\hat
\beta_t$. Next, a worker node constructs a debiased lasso
estimator $\hat \beta_t^u$ by performing one Newton step update on the
loss function, starting at the estimated value $\hat \beta_t$:
\begin{align}
\label{eqn:debiasing}
\hat \beta_t^u = 
\hat \beta_t + n^{-1} M_t X_t^T (y_t - X_t \hat \beta_t ),
\end{align}
where $n^{-1}X_t^T (y_t - X_t \hat \beta_t )$ is a subgradient of the
$\ell_1$ norm and the matrix $M_t \in \RR^{p \times p}$ serves as an
approximate inverse of the Hessian.  The idea of debiasing the lasso
estimator was introduced in the recent literature on statistical
inference in high-dimensions \citep{Zhang2011Confidence,
  Geer2013asymptotically, Javanmard2013Confidence}.  By removing the
bias introduced through the $\ell_1$ penalty, one can estimate the
sampling distribution of a component of $\hat \beta_t^u$ and make
inference about the unknown parameter of interest.  In our paper, we
will also utilize the sampling distribution of the debiased estimator,
however, with a different goal in mind. The above mentioned papers
proposed different techniques to construct the matrix $M$.  Here, we
adopt the approach proposed in \citep{Javanmard2013Confidence}, as it
leads to weakest assumption on the model in \eqref{eq:model}: each
machine uses a matrix $M_t = (\hat m_{tj})_{j=1}^{p}$ with rows:
\begin{align*}
\hat m_{tj} = \arg\min_{m_j \in \RR^p}\quad  m_j^T \hat\Sigma_t m_j 
\quad\text{subject to}\quad
\norm{\hat\Sigma_t m_j - e_j }_{\infty} \leq \mu. 
\end{align*}
where $e_j$ is the vector with $j$-th component equal to 1 and 0
otherwise and $\hat \Sigma_t = n^{-1} X_t^T X_t $.

After each worker obtains the debiased estimator $\hat \beta^u_t$, it
sends it to the central machine. After debiasing, the estimator is no
longer sparse and as a result each worker communicates $p$ numbers to
the master node. It is at the master where shared sparsity between the
task coefficients gets utilized. The master node concatenates the
received estimators into a matrix $\hat B = (\hat \beta^u_1, \hat
\beta^u_2,...,\hat \beta^u_m)$. Let $\hat B_j$ be the $j$-th row of
$\hat B$. The master performs the hard group thresholding to obtain an
estimate of $S$ as
\begin{align}
  \label{eq:hard_thr}
  \hat S(\Lambda) = \{j \mid \norm{\hat B_j}_2 > \Lambda \}.
\end{align}
The estimated support $\hat
S(\Lambda)$ is communicated back to each worker, which then use the
estimate of the support to filter their local estimate. In particular,
each worker produces the final estimate:
\begin{align}
\label{eq:local_thr}
\tilde\beta_{tj} = \left\{
\begin{array}{cl}
\hat \beta_{tj}^u & \text{if } j \in \hat S(\Lambda)\\
0 & \text{otherwise.}
\end{array}
\right.
\end{align}

\paragraph{Extension to multitask classification.}

DSML can be generalized to estimate multi-task generalized linear
models. We be briefly outline how to extend DSML to a multi-task
logistic regression model, where $y_{tk} \in \{-1,1\}$ and:
\begin{align}
\label{eq:logistic_regression_model}
P(y_{tk}|X_{tk}) = \frac{\exp\rbr{\frac{1}{2} y_{tk} X_{tk} \beta^*_t}}{\exp\rbr{-\frac{1}{2} y_{tk} X_{tk} \beta^*_t} + \exp\rbr{\frac{1}{2} y_{tk} X_{tk} \beta^*_t}}, \quad \forall k = 1,\ldots,n, t = 1,\ldots,m.
\end{align}
First, each worker solves the $\ell_1$-regularized
logistic regression problem
\[
\hat \beta_t = 
\arg \min_{\beta_t} 
\frac{1}{n} \sum_{k \in [n]} \log(1+\exp(- y_{tk} X_{tk} \beta_t)) 
+ \lambda_t \norm{\beta_t}_1.
\]
Let $W_t \in \RR^{n \times n}$ be a diagonal weighting matrix, with a
$k$-th diagonal element 
\[
W_{t(kk)} = 
\frac{1}{1+\exp(- X_{tk} \hat \beta_t)} 
\cdot \frac{\exp(- X_{tk} \hat \beta_t)}{1+\exp(-X_{tk} \hat \beta_t)},
\]
which will be used to approximately invert the Hessian matrix of the
logistic loss. The matrix $M_t = (\hat m_{tj})_{j=1}^p$, which serves
as an approximate inverse of the Hessian, in the case of logistic
regression can be obtained as a solution to the following optimization
problem:
\begin{align*}
\hat m_{tj} = \arg\min_{m_{tj} \in \RR^p}
%\quad m_{tj} \rbr{\frac{ {X_t}^T W_t X_t }{n} } m_{tj}^T
\quad m_{tj}^T  {X_t}^T W_t X_t m_{tj} 
\quad\text{subject to}\quad 
\norm{n^{-1} X_t^T W_t X_t m_{tj} - e_j }_{\infty} \leq \mu. 
\end{align*}
Finally, the debiased estimator is obtained as
\[
\hat \beta_t^u = \hat \beta_t + 
n^{-1}  M_t {X_t}^T 
\left(\frac{1}{2}(y_{t}+1) - \rbr{1+\exp(-X_{t} \hat \beta_t)}^{-1} \right),
\]
and then communicated to the master node. The rest of procedure is as
described before.

\section{Theoretical Analysis}

In this section, we present our main theoretical results for the
DSML procedure described in the previous section. We start by describing
assumptions that we make on the model in \eqref{eq:model}. Our results are based on the random design analysis, and we also discuss fixed design case in appendix. Let the data $X_t$ for $t$-task are drawn from a subgaussian random vector with covariance matrix $\EE[n^{-1} X_t^T X_t] = \Sigma_t$. We assume the subguassian random vectors for every task have bounded subgaussian norm: $\max_{t} \max_{k} \norm{X_{tk}}_{\psi_2} \leq \sigma_X$ \citep{Vershynin2012Introduction}. Let $\lambda_{\min}(\Sigma)$ to be the minimal eigenvalue of $\Sigma$, and $\lambda_{\max}(\Sigma)$ be its maximal eigenvalue. Let $\lambda_{\min} = \min_{t \in [m]} \lambda_{\min}(\Sigma_t)$ and $\lambda_{\max} = \max_{t \in [m]} \lambda_{\max}(\Sigma_t)$ be the bound on the eigenvalues of these covariance matrices. Let $K$ be the maximal diagonal elements of the inverse convariance matrices:
\[
K = \max_{t \in [m]} \max_{j \in [p]} (\Sigma_{t}^{-1})_{jj},
\]

The following theorem is our main result, which is proved in appendix.
\begin{theorem}
\label{thm:support_recovery}
Suppose $\lambda$ in \eqref{eqn:local_lasso}
was chosen as $\lambda_t = 4\sigma \sqrt{\frac{\log p}{n}}$. Furthermore, suppose that the multi-task
coefficients in \eqref{eq:model} satisfy the following bound on the
signal strength
\begin{align}
\label{eqn:signal_strengh}
\min_{j \in S} 
% \norm{B_j}_2
 \sqrt{\sum_{t \in [m]} ( \beta^*_{tj})^2} 
\geq 
  6 K \sigma \sqrt{\frac{m + \log p}{n}} + \frac{C \sigma_X^4 \lambda_{\max}^{1/2}  \sigma |S| \sqrt{m} \log p}{\lambda_{\min}^{3/2} n}:= 2 \Lambda^*,
\end{align}
where $C < 5000$. Then the support estimated by the master node satisfies $\hat S(\Lambda^*)
= S$ with probability at least $1 - mp^{-1}$.
\end{theorem}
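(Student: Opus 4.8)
The plan is to build everything on the exact ``master equation'' for the debiased estimator. Plugging $y_t = X_t\beta^*_t + \epsilon_t$ into \eqref{eqn:debiasing} and writing $\hat\Sigma_t = n^{-1}X_t^TX_t$ yields, for every task $t$,
\begin{align*}
\hat\beta^u_t - \beta^*_t \;=\; \underbrace{n^{-1}M_t X_t^T\epsilon_t}_{=:\,Z_t} \;+\; \underbrace{(I - M_t\hat\Sigma_t)(\hat\beta_t-\beta^*_t)}_{=:\,\Delta_t}.
\end{align*}
Writing $v_{\cdot j}=(v_{1j},\dots,v_{mj})$ for the $j$-th row of a task-stacked vector, this gives $\norm{\hat B_j}_2 \ge \norm{\beta^*_{\cdot j}}_2 - \norm{Z_{\cdot j}}_2 - \norm{\Delta_{\cdot j}}_2$ when $j\in S$ and $\norm{\hat B_j}_2 \le \norm{Z_{\cdot j}}_2 + \norm{\Delta_{\cdot j}}_2$ when $j\notin S$. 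Hence the whole theorem reduces to showing that, on an event of probability at least $1-mp^{-1}$,
\begin{align*}
\max_{j\in[p]}\norm{Z_{\cdot j}}_2 \;\le\; 3K\sigma\sqrt{\tfrac{m+\log p}{n}} \qquad\text{and}\qquad \max_{j\in[p]}\norm{\Delta_{\cdot j}}_2 \;\le\; \tfrac{C}{2}\cdot\tfrac{\sigma_X^4\lambda_{\max}^{1/2}\sigma|S|\sqrt m\log p}{\lambda_{\min}^{3/2}n},
\end{align*}
i.e.\ each term is at most the corresponding summand of $\Lambda^*$: then \eqref{eqn:signal_strengh} forces $\norm{\hat B_j}_2 > \Lambda^*$ for $j\in S$ and $\norm{\hat B_j}_2 \le \Lambda^*$ for $j\notin S$, so $\hat S(\Lambda^*)=S$.

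For the noise term, condition on the designs $\{X_t\}_t$. The $\epsilon_t$ being independent Gaussians, the entries $\{Z_{tj}\}_{t}$ are independent across tasks with $Z_{tj}\sim N\!\big(0,\,n^{-1}\sigma^2 m_{tj}^T\hat\Sigma_t m_{tj}\big)$. Using the objective and constraint defining $M_t$, together with sub-Gaussian concentration of $\hat\Sigma_t$ around $\Sigma_t$ (so that an appropriately sparsified $\Sigma_t^{-1}e_j$ is feasible --- this is the ``generalized coherence'' ingredient), one obtains $m_{tj}^T\hat\Sigma_t m_{tj}\lesssim K$ uniformly in $t,j$ with high probability. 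Conditionally, $\norm{Z_{\cdot j}}_2^2$ is then stochastically dominated by a constant times $(K\sigma^2/n)\,\chi^2_m$; a $\chi^2$ tail bound with deviation parameter $2\log p$ and a union bound over $j\in[p]$ give the first display (after absorbing constants), on an event of probability $\ge 1-p^{-1}$.

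For the bias term, the constraint $\norm{\hat\Sigma_t m_{tj}-e_j}_\infty\le\mu$ says every row of $M_t\hat\Sigma_t - I$ has $\ell_\infty$-norm at most $\mu$, hence $\norm{\Delta_t}_\infty \le \mu\norm{\hat\beta_t-\beta^*_t}_1$ and $\norm{\Delta_{\cdot j}}_2 \le \sqrt m\,\mu\,\max_t\norm{\hat\beta_t-\beta^*_t}_1$. Two standard inputs close the estimate, each holding simultaneously over all $m$ tasks on an event of probability $\ge 1-mp^{-1}$: (i) under the restricted eigenvalue property for sub-Gaussian design (valid once $n\gtrsim|S|\log p$, with constant of order $\lambda_{\min}$), the choice $\lambda_t=4\sigma\sqrt{\log p/n}$ gives the usual lasso bound $\norm{\hat\beta_t-\beta^*_t}_1 \lesssim |S|\sigma\sqrt{\log p/n}/\lambda_{\min}$; and (ii) the debiasing feasibility analysis \citep{Javanmard2013Confidence}, adapted to random design, shows the program for $M_t$ is feasible with $\mu$ of order $\sqrt{\log p/n}$ up to factors in $\sigma_X,\lambda_{\min},\lambda_{\max}$. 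Multiplying these bounds and collecting the numerical constants --- which is where the explicit $C<5000$ enters --- yields the second display.

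The main obstacle is the bias step: we must run \emph{both} the lasso analysis and the debiasing feasibility analysis uniformly over all $m$ machines, under the weaker restricted-eigenvalue and generalized-coherence assumptions (rather than mutual incoherence), and then track the dependence on $\sigma_X,\lambda_{\min},\lambda_{\max}$ precisely enough to land inside the stated $\Lambda^*$ with an explicit constant. The noise step is, by comparison, a routine $\chi^2$ union bound; its only delicate point --- uniform control of $m_{tj}^T\hat\Sigma_t m_{tj}$ --- reuses the same concentration of $\hat\Sigma_t$ needed in the bias step.
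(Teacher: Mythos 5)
Your proposal follows essentially the same route as the paper's proof: the identical decomposition of $\hat\beta^u_t-\beta^*_t$ into a noise term $n^{-1}M_tX_t^T\epsilon_t$ (controlled by a $\chi^2$ tail bound with variance bounded via the feasibility of $\Sigma_t^{-1}$ in the program for $M_t$) and a bias term $(I-M_t\hat\Sigma_t)(\hat\beta_t-\beta^*_t)$ (controlled by H\"older's inequality combining the generalized coherence bound on $\mu$ with the lasso $\ell_1$ error under restricted eigenvalue), followed by the same thresholding argument separating $j\in S$ from $j\notin S$. The only cosmetic difference is that you require each of the two error terms to sit below its corresponding summand of $\Lambda^*$, whereas the paper bounds their combined row norm directly; both yield the same conclusion.
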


Let us compare the minimal signal strength to that required by the
lasso and group lasso.  Let $B =
[\beta_1,\beta_2,\ldots,\beta_m]\in\RR^{p \times m}$ be the matrix of
true coefficients.  Simplifying \eqref{eqn:signal_strengh}, we have
that our procedure requires the minimum signal strength to satisfy
\begin{equation}
  \label{eq:DSMLsigstrength}
  \min_{j \in S} \frac{1}{\sqrt{m}} \norm{B_j}_2 
\gtrsim \sqrt{\frac{1}{n} \rbr{1 + \frac{\log p }{m}}} + \frac{ |S| \log p}{n},
\end{equation}
where $a(n) \gtrsim b(n)$ means that for some $c,N$, $a(n) > c \cdot
b(n), \forall n > N$.  For the centralized group lasso, the standard
analysis assumes a stronger condition on the data, namely that the
design matrix satisfies mutual incoherence with parameter $\alpha$ and
sparse eigenvalue condition. Mutual incoherence is a much stronger
conditions on the design in comparison to the generalized coherence
condition required by DSML. Group lasso recovers the support if
\citep[Corollary 5.2 of][]{Lounici2011Oracle}:
\begin{equation}
  \label{eq:GLsigstrength}
  \min_{j \in S} \frac{1}{\sqrt{m}} \norm{B_j}_2 
\geq \frac{4\sqrt{2} C_{\alpha,\kappa} \sigma}{\sqrt{n}} \sqrt{1 + \frac{2.5 \log p}{m}} \gtrsim \sqrt{\frac{1}{n} \rbr{1 + \frac{\log p }{m}}}.
\end{equation}
where $C_{\alpha,\kappa}$ 
is some constant depend on the mutual incoherence and sparse eigenvalue parameters.  Under the irrepresentable condition
on the design (which is weaker than the mutual incoherence), the lasso
requires the signal to satisfy \citep{bunea08honest,wainwright06sharp}:
\begin{equation}
  \label{eq:LASSOsigstrength}
  \min_{t \in [m]} 
\min_{j \in S} 
|\beta_{tj}^*| \geq   C_{\gamma,\kappa} \sigma \sqrt{\frac{\log p}{n}} \gtrsim  \sqrt{\frac{\log p}{n}}
\end{equation}
for some constant $C_{\gamma,\kappa}$ of the mutual coherence
parameter $\gamma$ and of $\kappa$.  Ignoring for the moment
the differences in the conditions on the design matrix, there are two
advantages of the multitask group lasso over the local lasso: relaxing
the signal strength requirement to a requirement on the average
strength across tasks, and a reduction by a factor of $m$ on the $\log
p$ term.  Similarly to the group lasso, DSML requires a lower bound
only on the average signal strength, not on any individual
coefficient.  And as long as $m \ll n$, or more precisely $n \gtrsim
\frac{m|S|^2 (\log p)^2}{\kappa^2(m + \log p)}$ enjoys the same linear
reduction in the dominant term of the required signal strength, match
the leading term of the group lasso bound.

Based on Theorem~\ref{thm:support_recovery}, we have the following
corollary that characterizes estimation error and prediction risk of
DSML, with the proof  given in the appendix.
\begin{corollary}
\label{cor:prediction_estimation}
Suppose the conditions of Theorem~\ref{thm:support_recovery}
hold. With probability at least $1 - mp^{-1}$, we have
\begin{align*}
& \sum_{j=1}^p \norm{\tilde B_{j} - B_j}_2 \leq 6 K |S| \sigma \sqrt{\frac{m + \log p}{n}} + \frac{C \sigma_X^4 \lambda_{\max}^{1/2}  \sigma |S|^2 \sqrt{m} \log p}{\lambda_{\min}^{3/2} n} \\
\intertext{and}
&\frac{1}{nm} \sum_{t=1}^m (\EE_{X_t} (X_t^T \tilde \beta_t - X_t^T \beta^*_t))^2 \leq 
\frac{36 K^2 |S| \sigma^2}{n}\rbr{1 + \frac{\log p}{m}} + \frac{C^2 \sigma_X^8 \lambda_{\max}  \sigma^2 |S|^3 (\log p)^2}{\lambda_{\min}^{3} n^2}.
\end{align*}
\end{corollary}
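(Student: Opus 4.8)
The plan is to reduce the corollary to the row-wise error control already obtained inside the proof of Theorem~\ref{thm:support_recovery}. Let $\mathcal{E}$ be the event $\{\hat S(\Lambda^*) = S\}$, which by Theorem~\ref{thm:support_recovery} has probability at least $1-mp^{-1}$; everything below is on $\mathcal{E}$. The key structural observation is that on $\mathcal{E}$ the thresholded estimate localizes on the true support: by \eqref{eq:local_thr}, $\tilde\beta_{tj}=0$ for $j\notin S$, while $\tilde\beta_{tj}=\hat\beta_{tj}^u$ for $j\in S$. Since $\beta^*_{tj}=0$ for $j\notin S$ as well, this gives, row by row,
\begin{equation*}
\norm{\tilde B_j - B_j}_2 = \norm{\hat B_j^u - B_j}_2 \ \text{ for } j\in S, \qquad \norm{\tilde B_j - B_j}_2 = 0 \ \text{ for } j\notin S,
\end{equation*}
so all error of $\tilde\beta$ is inherited from the debiased estimator $\hat\beta^u$ on the $|S|$ coordinates of $S$.

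Next I would quote the deviation bound for $\hat\beta^u$ produced in the proof of Theorem~\ref{thm:support_recovery}. From \eqref{eqn:debiasing} and \eqref{eq:model} one has the debiased-lasso decomposition
\begin{equation*}
\hat\beta_t^u - \beta_t^* = n^{-1} M_t X_t^T \epsilon_t + (I - M_t\hat\Sigma_t)(\hat\beta_t - \beta_t^*),
\end{equation*}
whose first (``noise'') term is Gaussian with coordinatewise variance $\sigma^2 n^{-1}(M_t\hat\Sigma_t M_t^T)_{jj} \lesssim \sigma^2 K/n$ (by feasibility and optimality of $M_t$), and whose second (``bias'') term has sup-norm at most $\mu\norm{\hat\beta_t-\beta_t^*}_1 \lesssim \sigma|S|\log p/n$ (by the choice of $\mu$ and the $\ell_1$ lasso rate under the restricted eigenvalue condition). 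Aggregating across the $m$ tasks and taking a union bound over $j$ (chi-square concentration of the noise row-norms) yields, on $\mathcal{E}$, the uniform bound $\max_{j}\norm{\hat B_j^u - B_j}_2 \le 2\Lambda^*$ with $\Lambda^*$ as in \eqref{eqn:signal_strengh}; this is precisely what makes the hard-thresholding step of Theorem~\ref{thm:support_recovery} succeed, so it may be reused rather than re-derived.

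Given these two facts the corollary is bookkeeping. For the $\ell_1/\ell_2$ error,
\begin{equation*}
\sum_{j=1}^p \norm{\tilde B_j - B_j}_2 = \sum_{j\in S}\norm{\hat B_j^u - B_j}_2 \le |S|\max_{j\in S}\norm{\hat B_j^u - B_j}_2 \le 2|S|\Lambda^*,
\end{equation*}
which after substituting the explicit $\Lambda^*$ gives the first displayed bound. For the prediction error, on $\mathcal{E}$ each $\tilde\beta_t-\beta_t^*$ is supported on $S$, so
\begin{equation*}
\frac{1}{nm}\sum_{t=1}^m \EE_{X_t}\norm{X_t\tilde\beta_t - X_t\beta^*_t}_2^2 = \frac{1}{m}\sum_{t=1}^m (\tilde\beta_t - \beta_t^*)^T \Sigma_t (\tilde\beta_t - \beta_t^*);
\end{equation*}
bounding each quadratic form and summing gives $\lesssim \tfrac{|S|}{m}(2\Lambda^*)^2$, and expanding $(2\Lambda^*)^2$ via $(a+b)^2\lesssim a^2+b^2$ reproduces the two stated terms, the factor $1+\log p/m$ arising from $(m+\log p)/m$. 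The one place requiring a little care — the main obstacle, such as it is — is matching the stated constant in the prediction bound: crudely bounding $(\hat\beta^u_t-\beta^*_t)_S^T\Sigma_t(\hat\beta^u_t-\beta^*_t)_S$ by $\lambda_{\max}\norm{(\hat\beta^u_t-\beta^*_t)_S}_2^2$ is wasteful, so for the noise part one should instead control the quadratic form directly, using $\EE[\,\cdot\mid X_t]=\sigma^2 n^{-1}\mathrm{tr}(\Sigma_t^{SS}(M_t\hat\Sigma_tM_t^T)_{SS})$ together with $(M_t\hat\Sigma_tM_t^T)_{jj}\lesssim K$ and a Hanson--Wright-type tail bound, which keeps the leading order at $K^2|S|\sigma^2(1+\log p/m)/n$. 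Everything else is direct substitution into the estimates of Theorem~\ref{thm:support_recovery}.
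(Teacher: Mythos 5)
Your proof follows essentially the same route as the paper's: condition on the support-recovery event from Theorem~\ref{thm:support_recovery}, reuse the row-wise bound on $\norm{\hat B_j - B_j}_2$ established in its proof, sum over the $|S|$ rows of the true support for the $\ell_1/\ell_2$ bound, and pass through the covariance for the prediction bound. Your side remark about the prediction constant is well taken: the crude step of bounding the quadratic form by $\lambda_{\max}\norm{\tilde\beta_t-\beta^*_t}_2^2$ is exactly what the paper's own proof does, and that chain yields a leading term carrying an extra factor of $\lambda_{\max}$ (and a factor $2$ from $(a+b)^2\le 2a^2+2b^2$) that the stated constant $36K^2$ does not reflect, so either the constant should be adjusted or a sharper treatment of the noise quadratic form, such as the one you sketch, is needed.
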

Let us compare these guarantees for to the group lasso.  For DSML
Corollary 2 yields:
\begin{equation}
\frac{1}{\sqrt{m}} \sum_{j=1}^p \norm{\tilde B_{j} - B_j}_2 
\lesssim \frac{|S|}{\sqrt{n}} \sqrt{1 + \frac{\log p}{m}} + \frac{|S|^2 \log p}{ n},  
\end{equation}
When using the group lasso, the restricted eigenvalue condition is sufficient for obtaining error bounds and following holds for
the group lasso
\citep[Corollary 4.1 of][]{Lounici2011Oracle}:
\begin{equation}\label{eq:A}
\frac{1}{\sqrt{m}} \sum_{j=1}^p \norm{\tilde B_{j} - B_j}_2  \leq \frac{32\sqrt{2} \sigma |S|}{\kappa \sqrt{n}} \sqrt{\rbr{1 + \frac{2.5 \log p}{m}}} \lesssim \frac{|S|}{\sqrt{n}} \sqrt{1 + \frac{\log p}{m}},
\end{equation}
which is min-max optimal (up to a logarithmic factor).  Albeit with
the stronger generalized coherence condition, DSML matches this bound
when $n \gtrsim \frac{m|S|^2 (\log p)^2}{(m + \log p)}$.  Similarly
for prediction DSML attains:
\begin{equation}
\frac{1}{nm} \sum_{t=1}^m (\EE_{X_t} (X_t^T \tilde \beta_t - X_t^T \beta^*_t))^2
  \lesssim \frac{|S| \sigma^2}{n} \rbr{1 + \frac{\log p}{m}} + \frac{\sigma^2 |S|^3 (\log p)^2}{n^2},  
\end{equation}
which in the same regime matches the group lasso minimax optimal rate:
\begin{equation}\label{eq:B}
\frac{1}{nm} \sum_{t=1}^m (\EE_{X_t} (X_t^T \tilde \beta_t - X_t^T \beta^*_t))^2 
 \leq \frac{128  |S| \sigma^2}{\kappa n} \rbr{1 + \frac{2.5 \log p}{m}}  \lesssim \frac{|S| \sigma^2}{n} \rbr{1 + \frac{\log p}{m}}.  
\end{equation}
In both cases, as long as $m$ is not too large, we have a linear improvement
over Lasso, which corresponds to \eqref{eq:A} and \eqref{eq:B} with $m=1$.

\section{Experimental results}

\begin{figure}[t]
\begin{center}
\includegraphics[width=0.33 \textwidth]{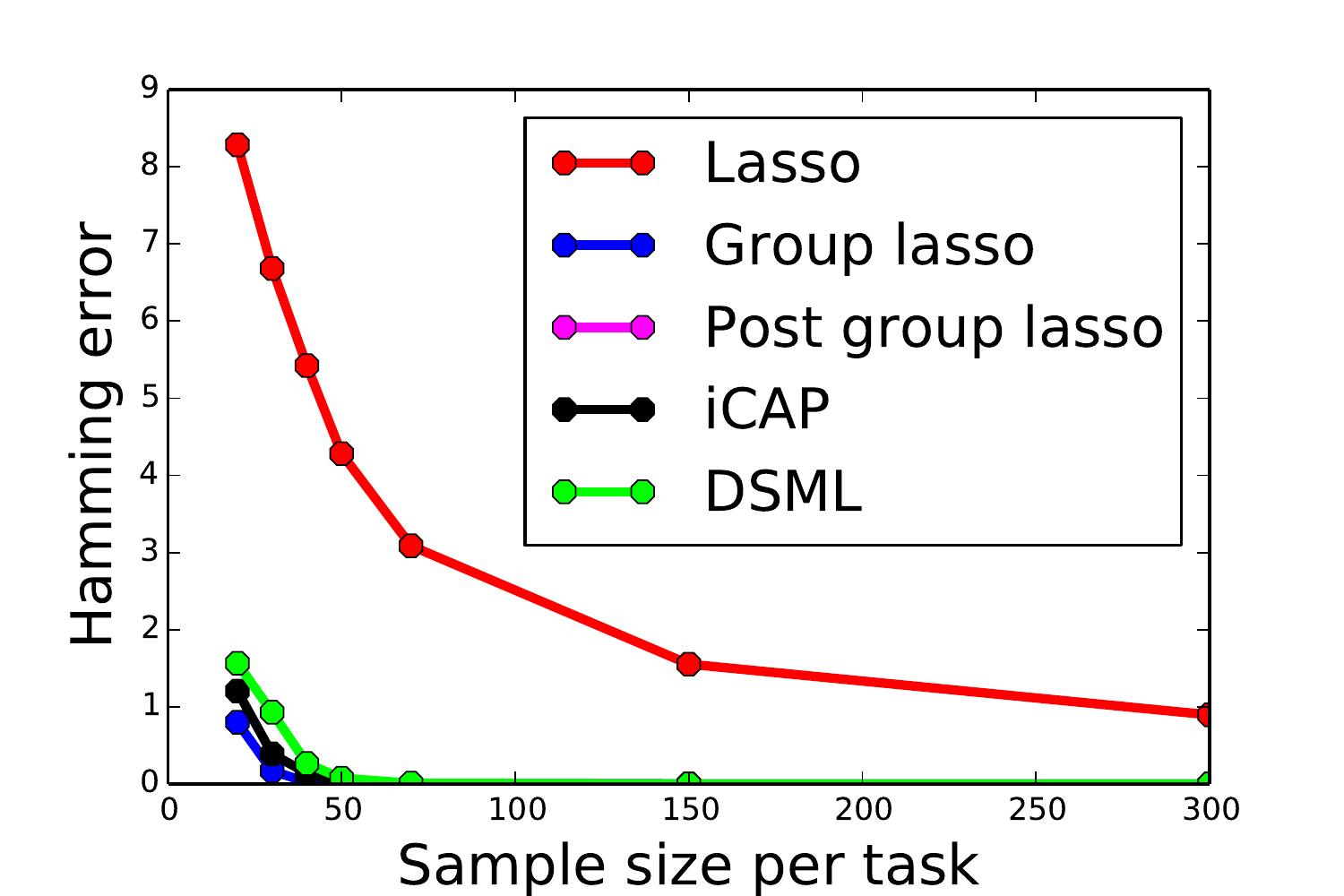}%
\includegraphics[width=0.33 \textwidth]{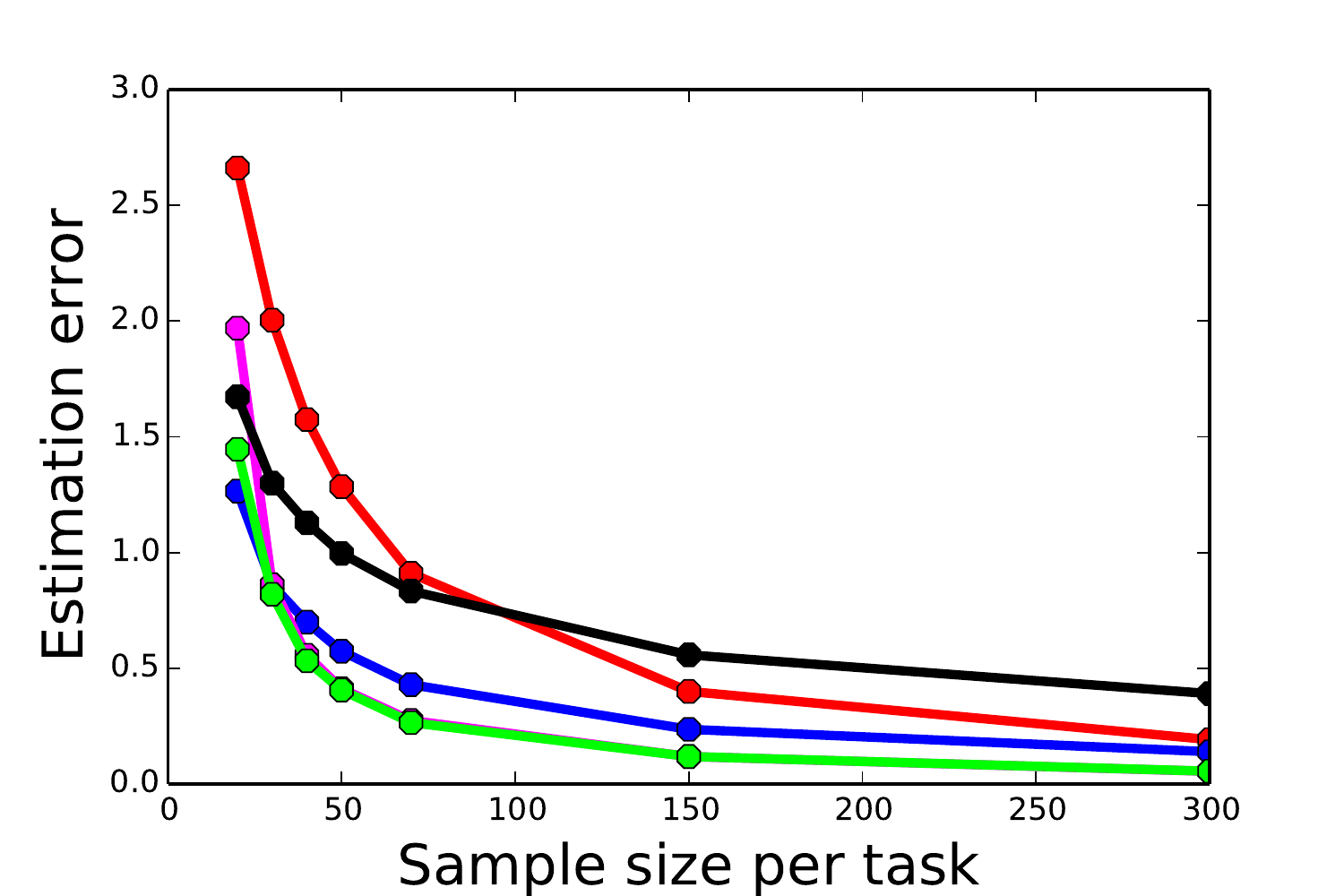}%
\includegraphics[width=0.33 \textwidth]{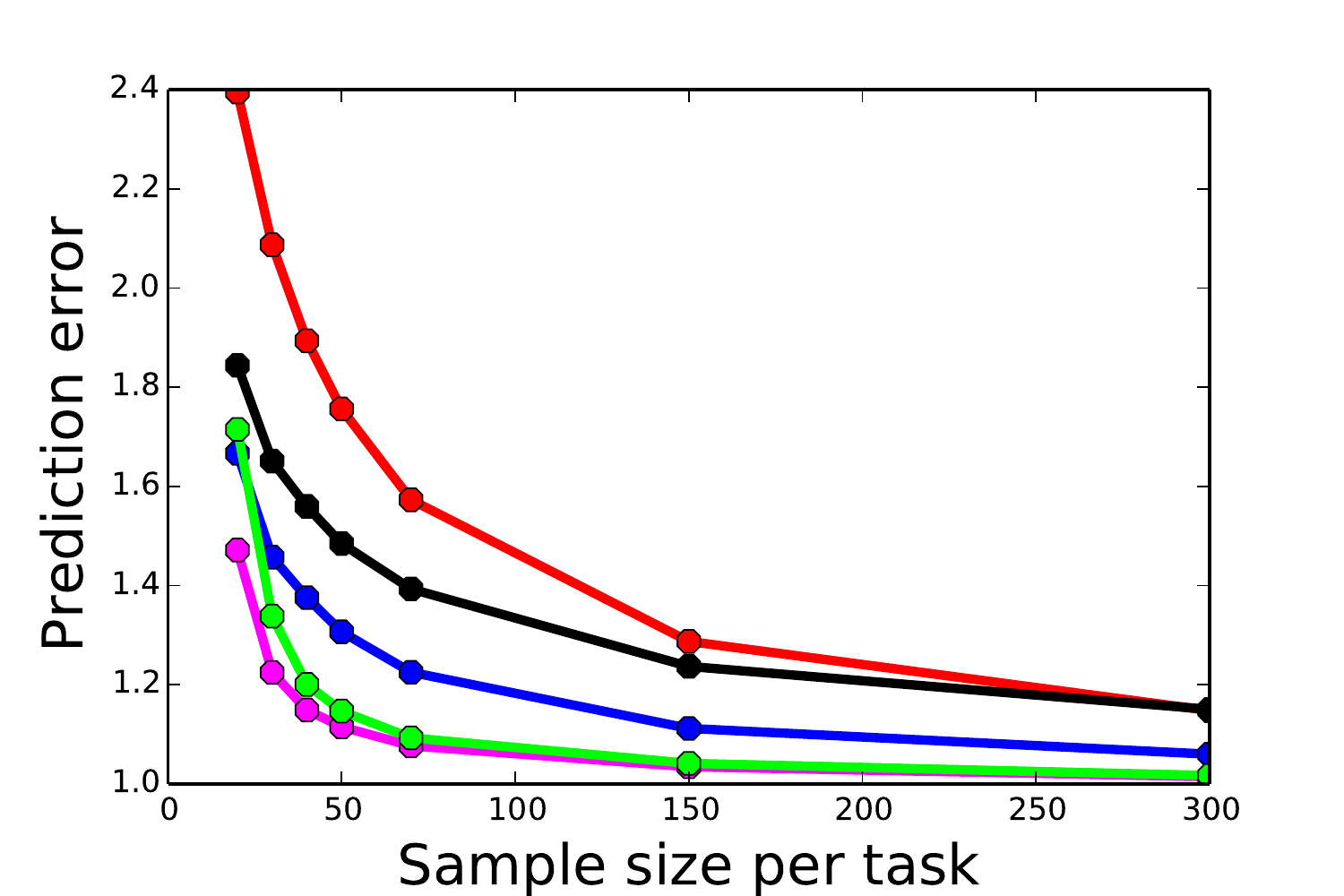}%
\end{center}
\begin{center}
\includegraphics[width=0.33 \textwidth]{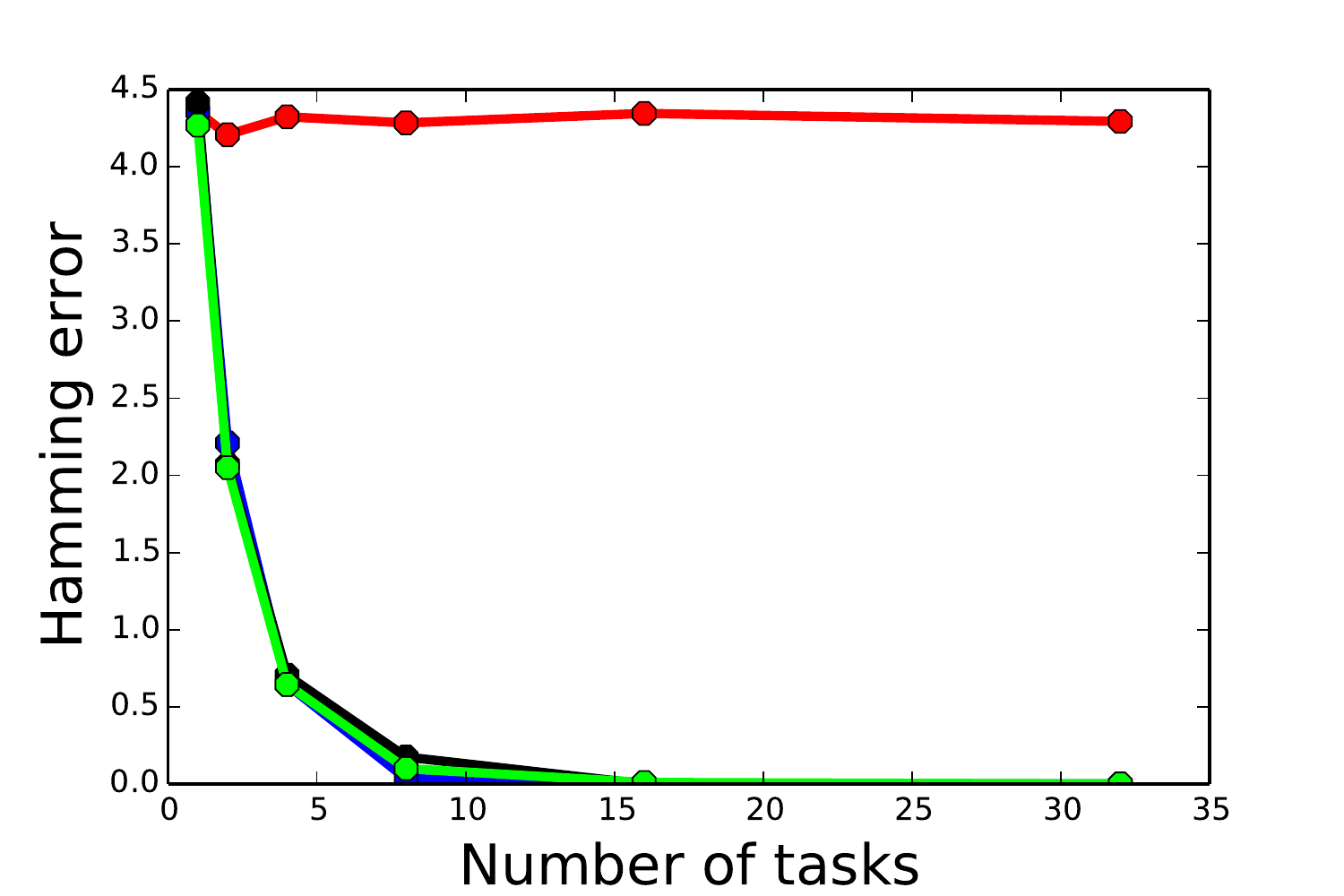}%
\includegraphics[width=0.33 \textwidth]{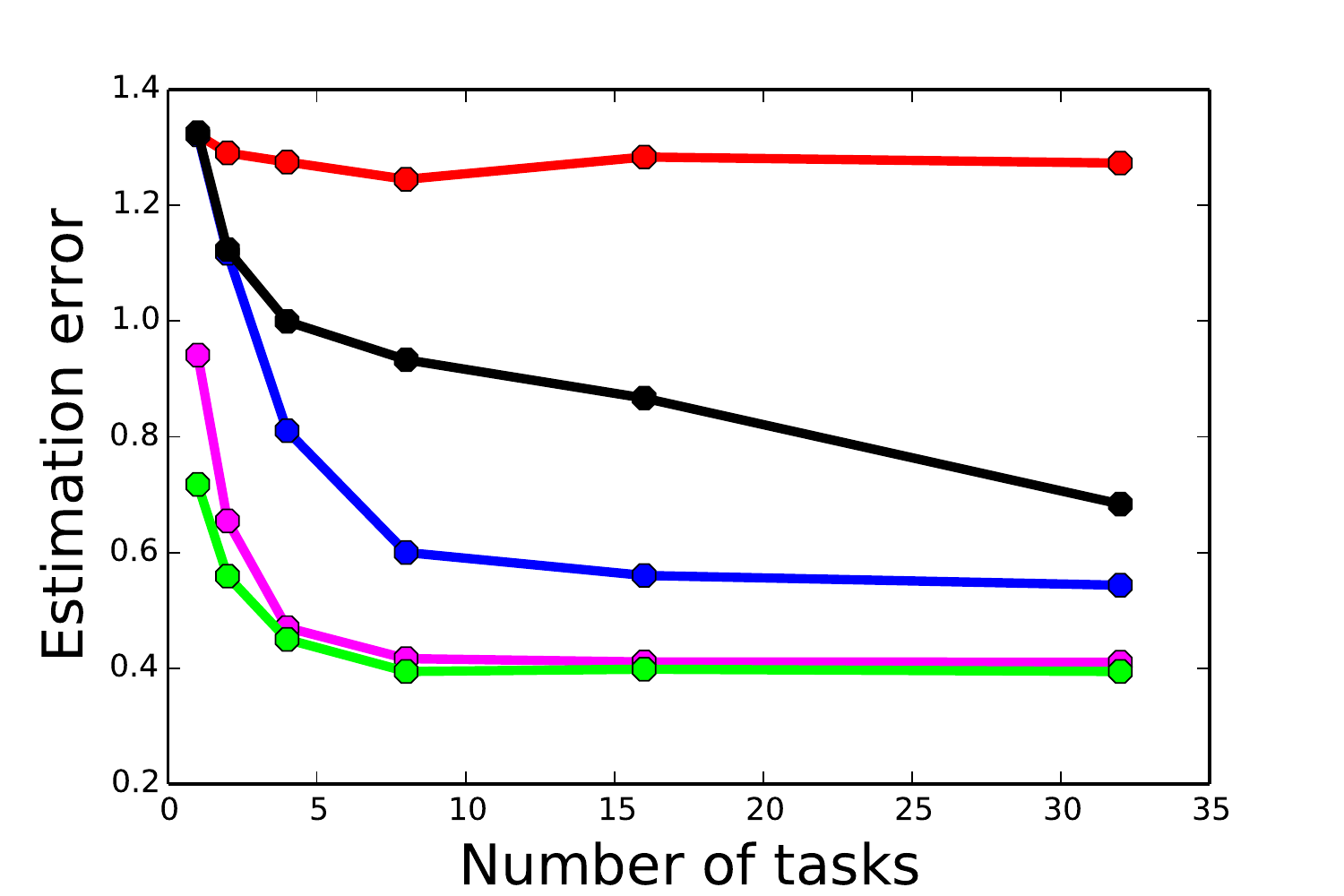}%
\includegraphics[width=0.33 \textwidth]{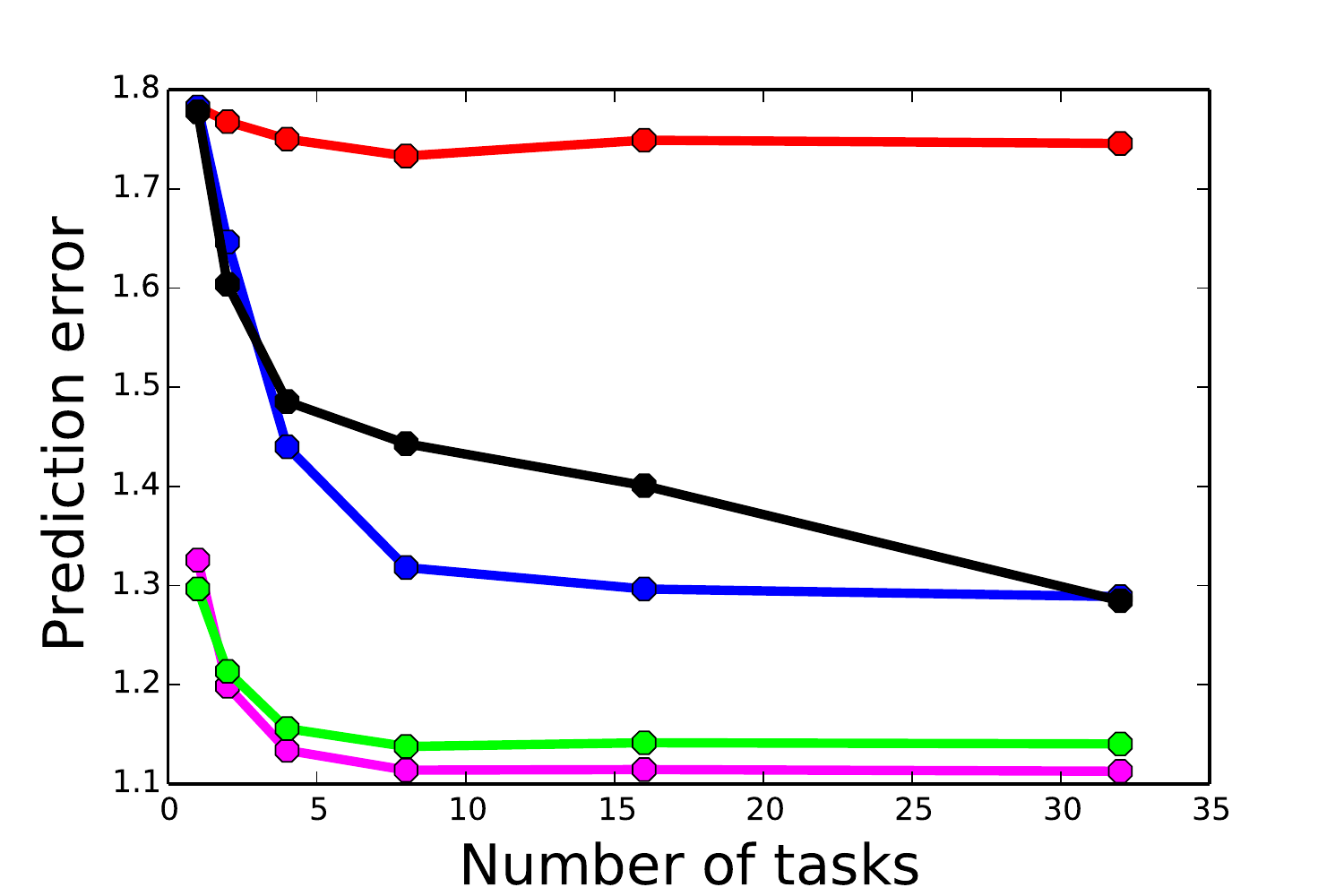}%
\end{center}
\caption{ Hamming distance, estimation error, and prediction error for
  multi-task regression with $p = 200$. Top row: the number of tasks
  $m=10$. Sample size per tasks is varied. Bottom row: Sample size
  $n=50$. Number of tasks $m$ varied.}
\label{fig:simulation_regression}
\end{figure}

Our first set of experiments is on simulated data. We generated
synthetic data according to the model in \eqref{eq:model} and in
\eqref{eq:logistic_regression_model}. Rows of $X_t$ are sampled from a
mean zero multivariate normal with the covariance matrix $\Sigma =
(\Sigma_{ab})_{a,b\in[p]}$, $\Sigma_{ab} = 2^{-|a-b|}$. The data
dimension $p$ is set to $200$, while the number of true relevant
variables $s$ is set to $10$.  Non-zero coefficients of $\beta$ are
generated uniformly in $[0,1]$.  Variance $\sigma^2$ is set to 1. Our
simulation results are averaged over 200 independent runs.

We investigate how performance of various procedures changes as a
function of problem parameters $(n,p,m,s)$. We compare the following
procedures: i) local lasso, ii) group lasso, iii) refitted group
lasso, where a worker node performs ordinary least squares on the
selected support, iv) iCAP, and v) DSML. The parameters for local
lasso, group lasso and iCAP were tuned to achieve the minimal Hamming
error in variable selection. For DSML, to debias the output of local
lasso estimator, we use $\mu = \sqrt{\log p/n}$. The thresholding
parameter $\Lambda$ is also optimized to achive the best variable
selection performance. The simulation results for regression are shown
in Figure \ref{fig:simulation_regression}. In terms of
support recovery (measured by Hamming distance), Group lasso, iCAP,
and DSML all perform similarly and significantly better than the local
lasso. In terms of estimation error, lasso perform the worst, while
DSML and refitted group lasso perform the best. This might be a result
of bias removal introduced by regularization.  Since the
group lasso recovers the true support in most cases, refitting on
it yields the maximum likelihood estimator on the true support.  It is
remarkable that DSML performs almost as well as this oracle estimator.

Figure \ref{fig:simulation_classification} shows the simulation results for classification. Similar with the regression case, we make the following observations:
\begin{itemize}
\item The group sparsity based approaches, including DSML, significantly outperform the individual lasso.
\item In terms of Hamming variable selection error, DSML performs slightly worse than group lasso and iCAP. While in terms of estimation error and prediction error, DSML performs much better than group lasso and icap. Given the fact that group lasso recovers the true support in most cases, refitted group lasso is equivalent to oracle maximum likelihood estimator. It is remarkable that DSML only performs slightly worse than refitted group lasso.
\item The advantage of DSML, as well as group lasso over individual lasso, becomes more and more significant with the increase in number of tasks.
\end{itemize}

\begin{figure}[htpb]
\begin{center}
\includegraphics[width=0.33 \textwidth]{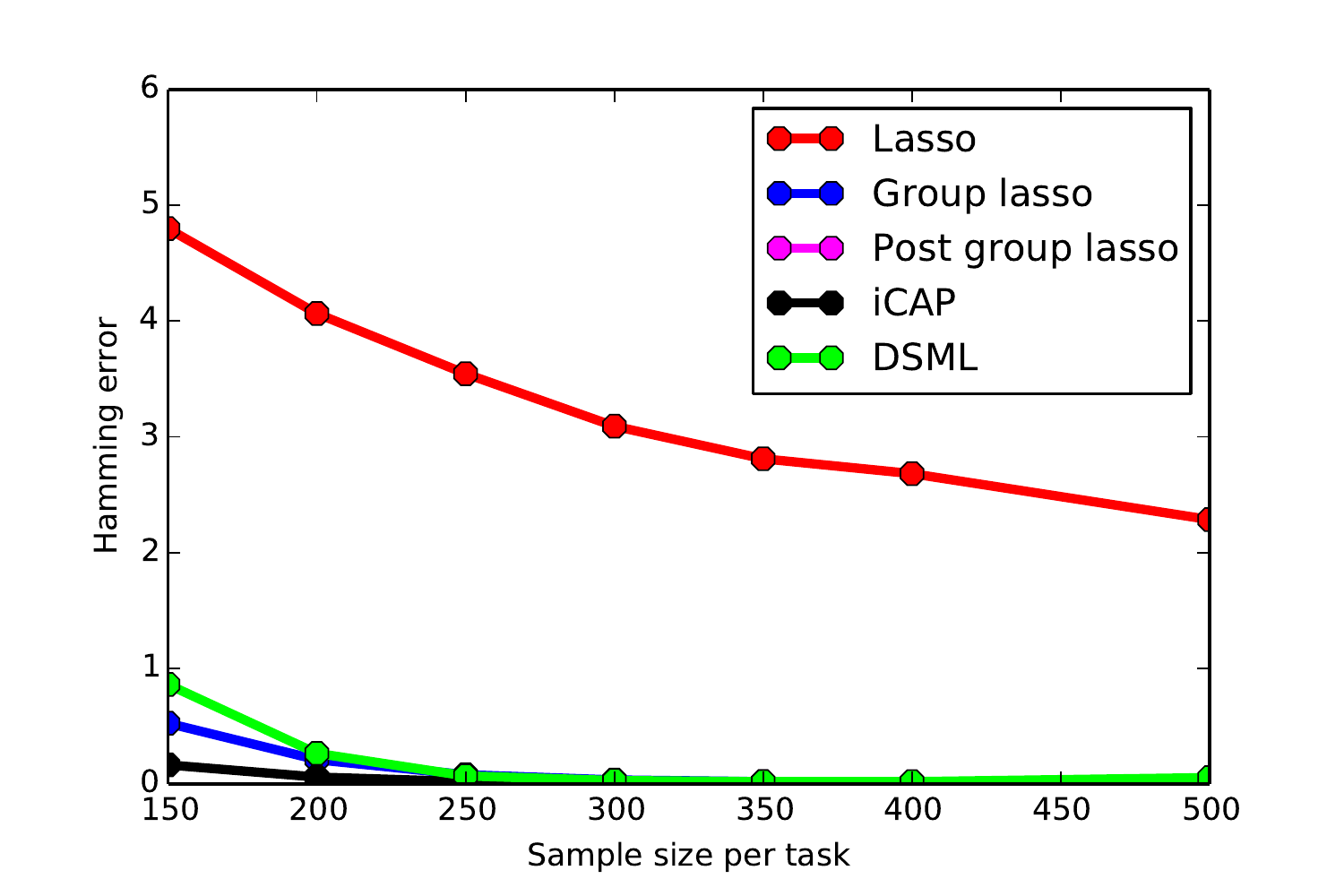}%
\includegraphics[width=0.33 \textwidth]{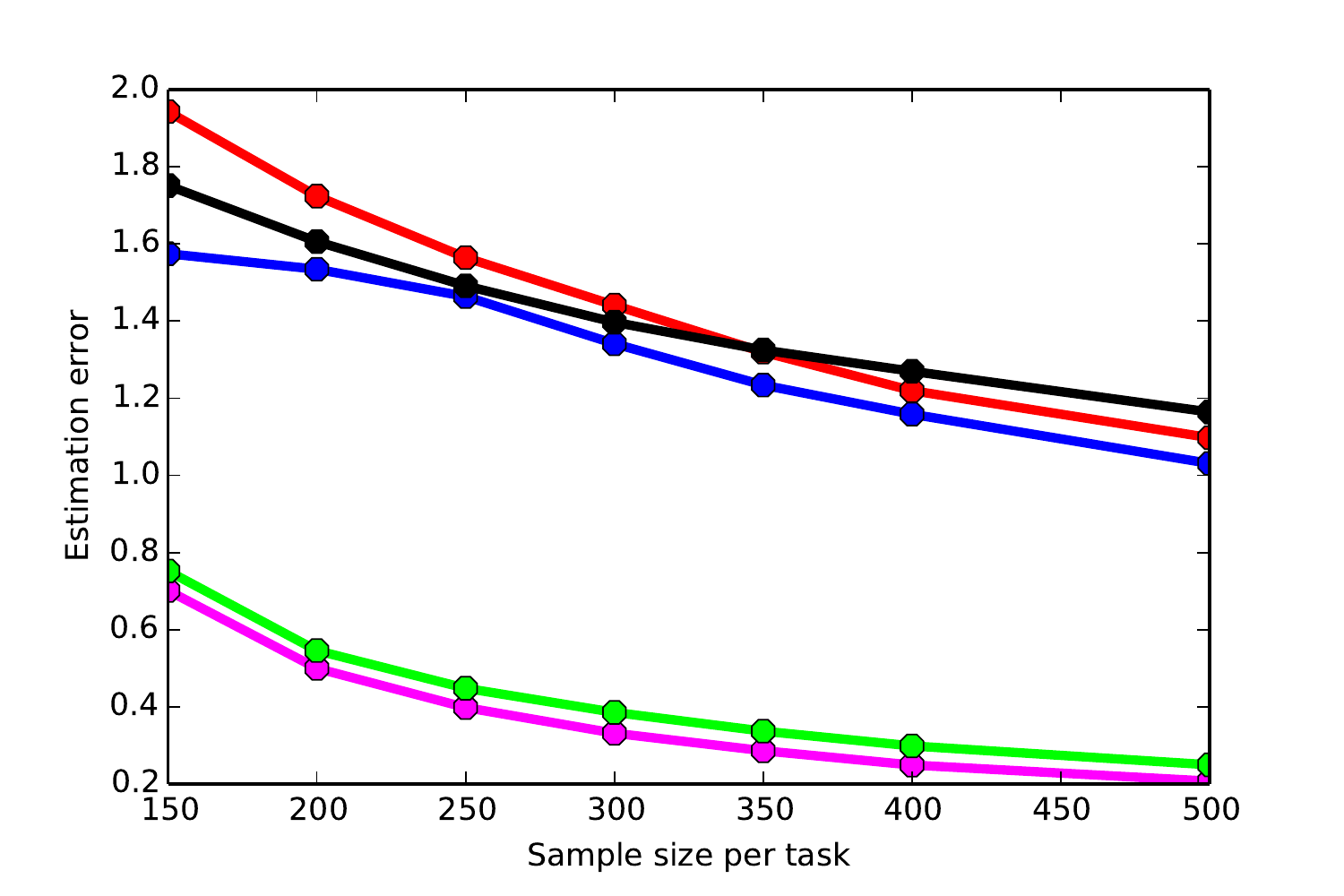}%
\includegraphics[width=0.33 \textwidth]{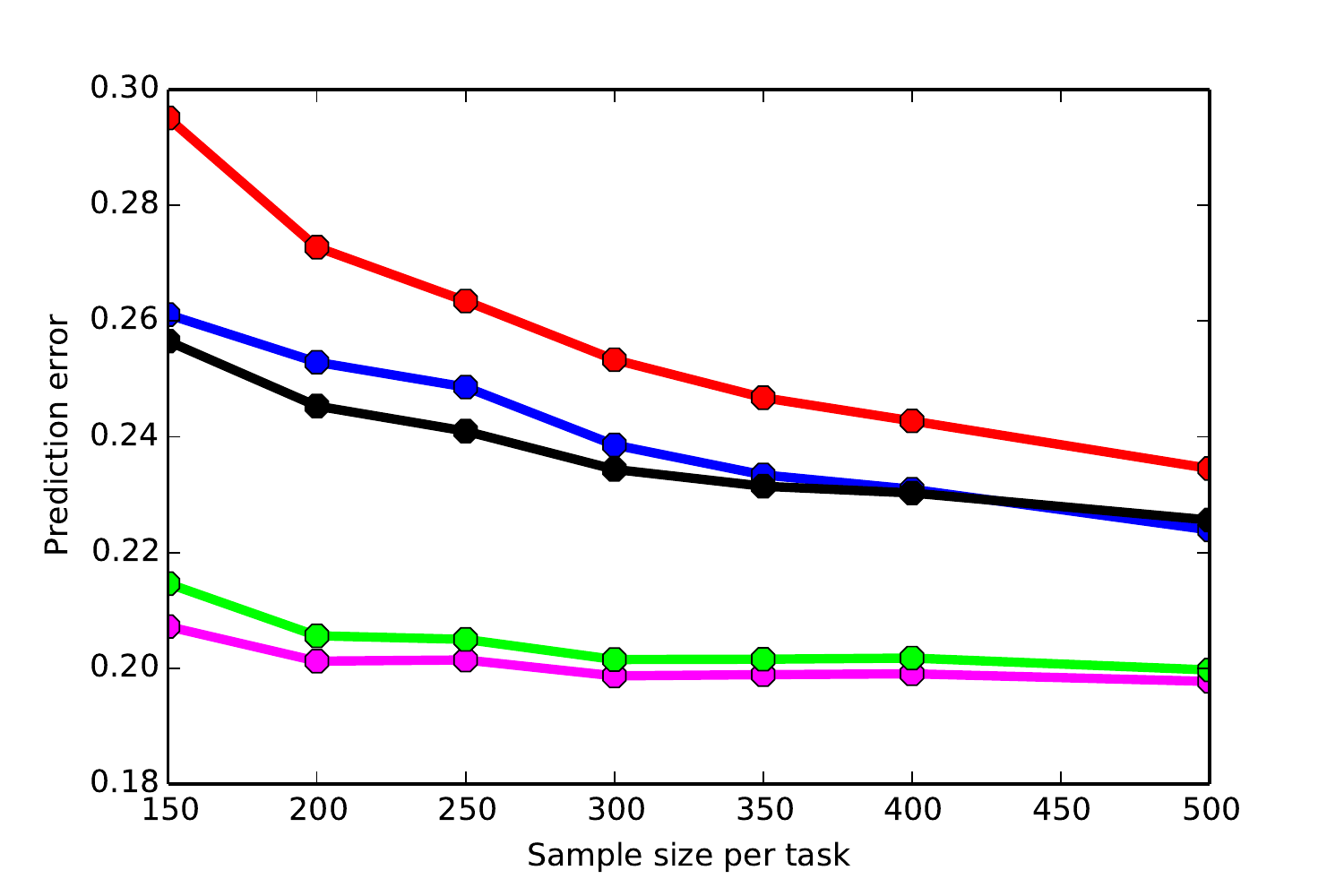}%
\end{center}
\begin{center}
\includegraphics[width=0.33 \textwidth]{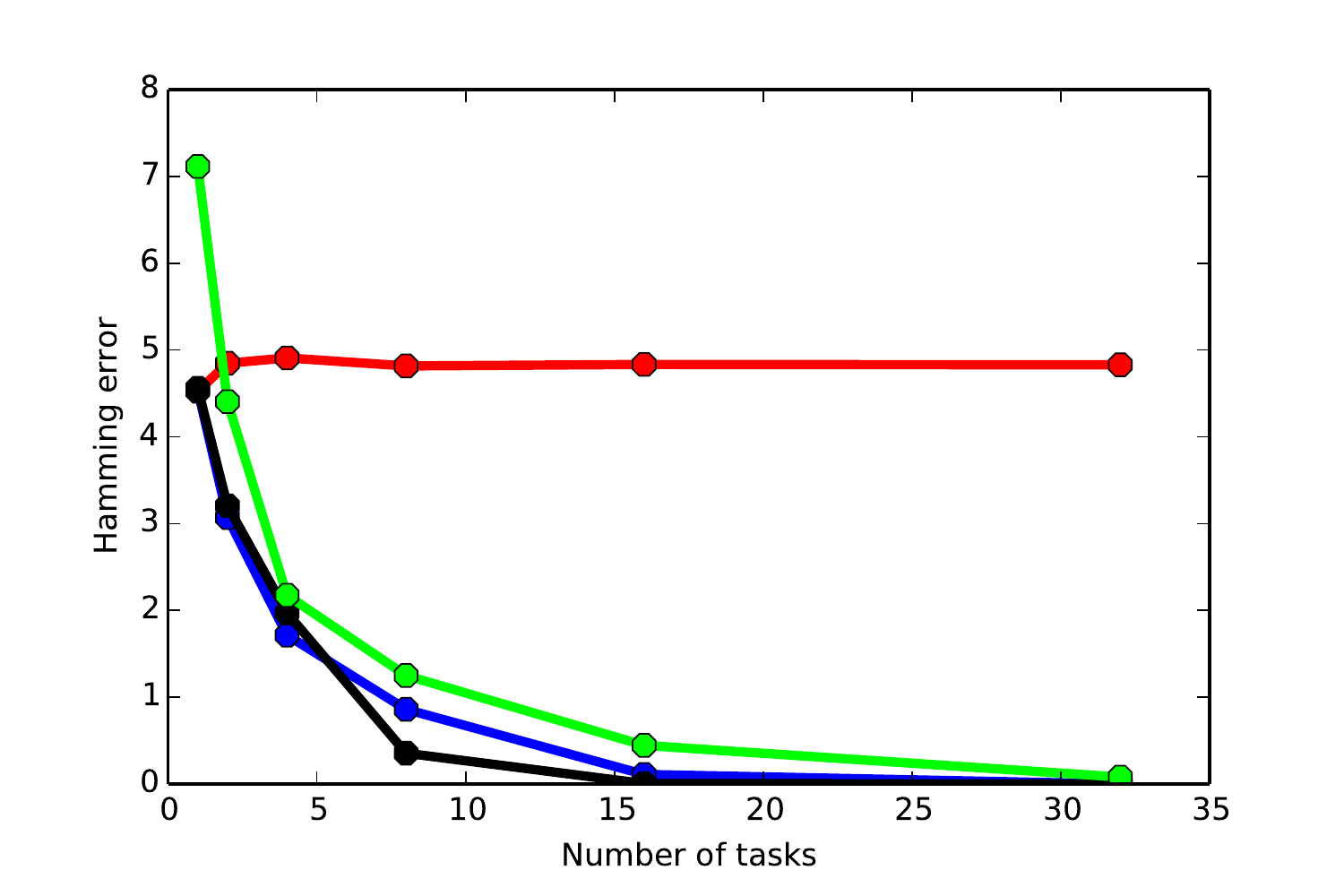}%
\includegraphics[width=0.33 \textwidth]{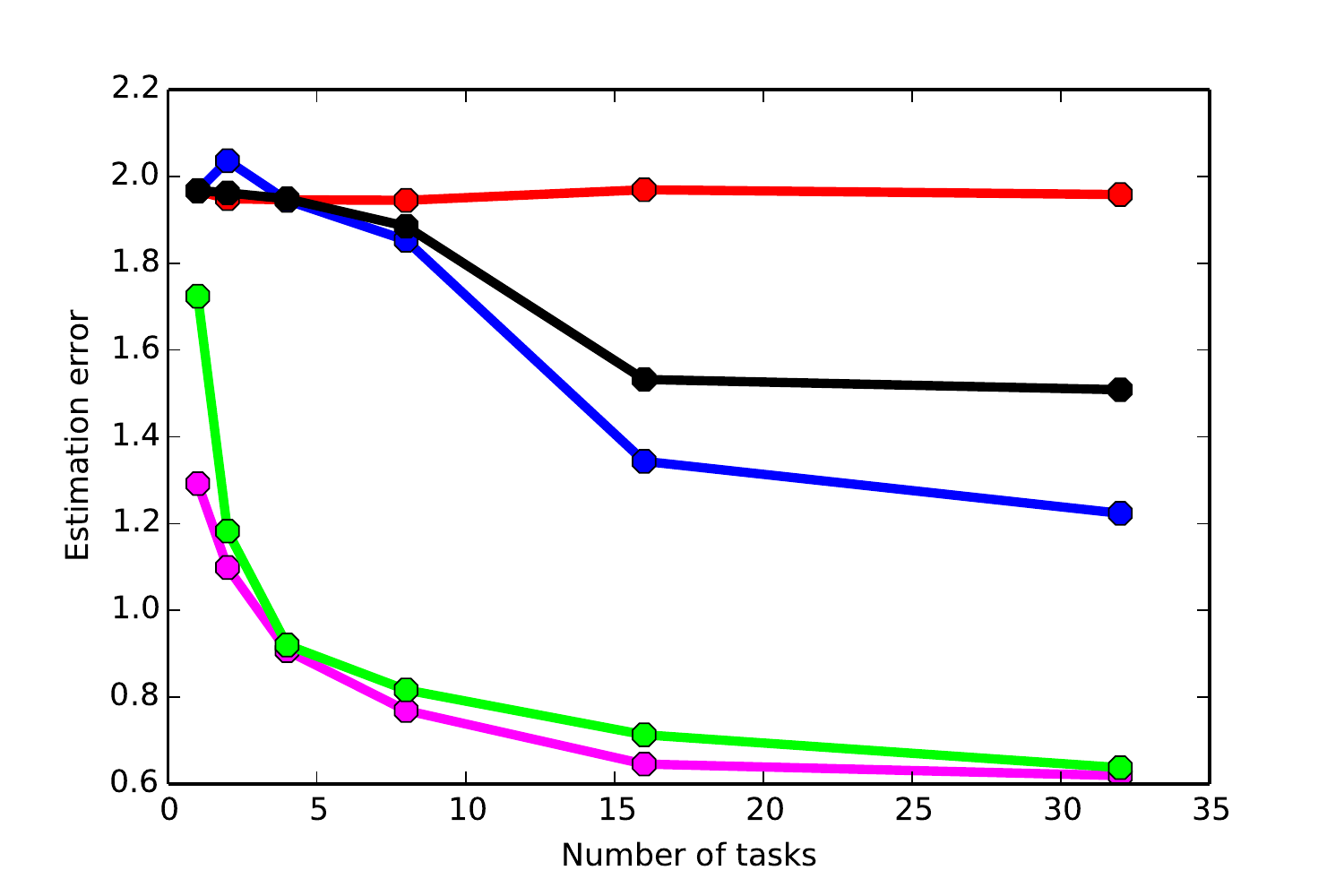}%
\includegraphics[width=0.33 \textwidth]{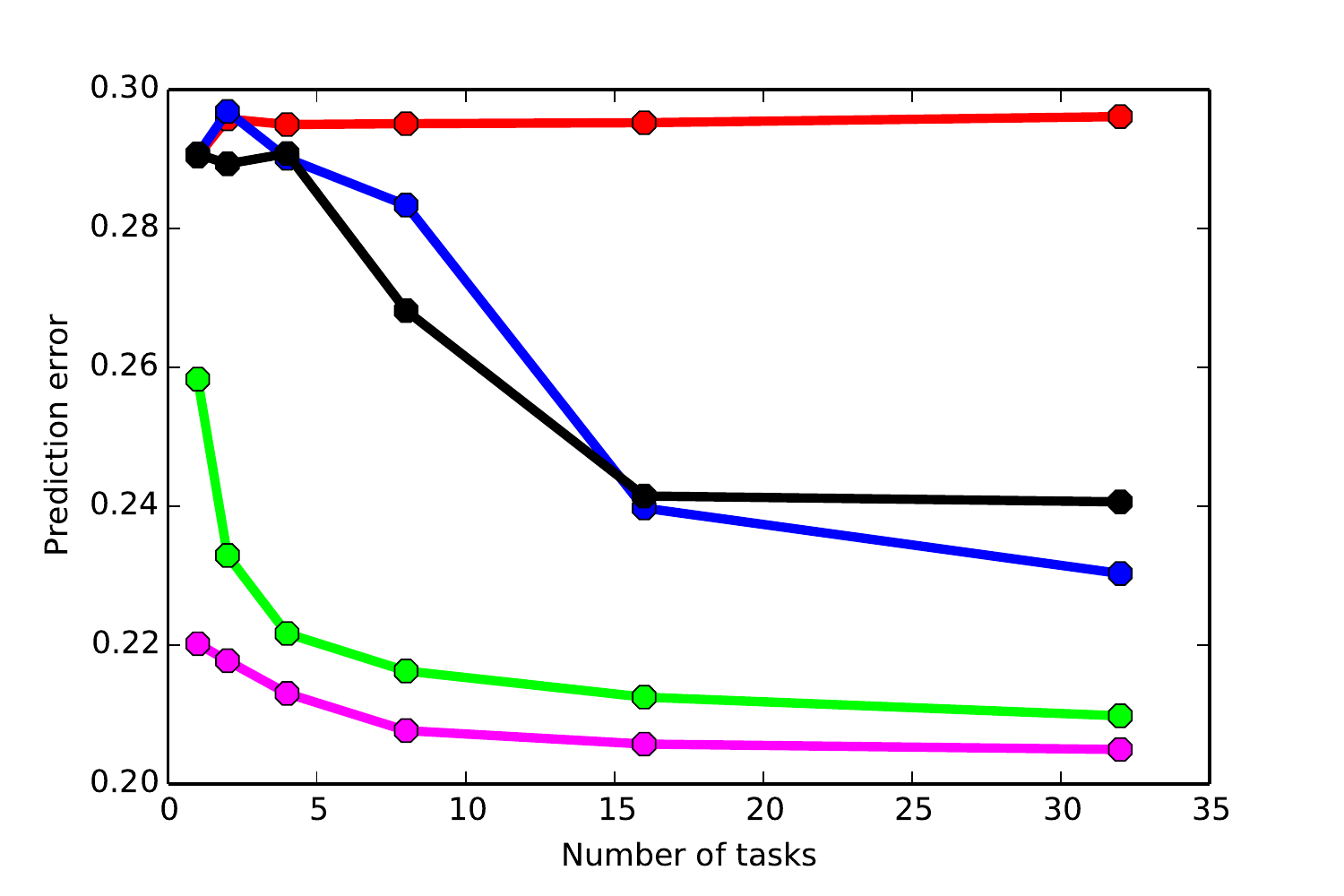}%
\end{center}
\caption{ Hamming distance, estimation error, and prediction error for
  multi-task classification with $p = 200$. Top row: the number of tasks
  $m=10$. Sample size per tasks is varied. Bottom row: Sample size
  $n=150$. Number of tasks $m$ varied.}
\label{fig:simulation_classification}
\end{figure}

We also evaluated DSML on the following benchmark data
sets considered in previous investigations of shared support multi-task learning:\begin{itemize}
\item[\bf School.] This is a widely used dataset for multi-task
  learning \citep{argyriou08convex}. The goal is to predict the students' performance at
  London's secondary schools. There are 27 attributes for each
  student. The tasks are naturally divided according to different
  schools. We only considered schools with at least 200 students,
  which results in 11 tasks.
\item[\bf Protein.] The task is to predict the protein secondary
  structure \citep{proteindata}. We considered three binary classification tasks here: coil vs helix, helix vs strand, strand vs coil. The dataset consists of 24,387 instances in total, each with 357 features.
\item[\bf OCR.]
  We consider the optical character recognition problem. Data were gathered by Rob Kassel at the MIT Spoken Language Systems Group
  \footnote{\url{http://www.seas.upenn.edu/~taskar/ocr/}}. Following \citep{DBLP:journals/sac/ObozinskiTJ10}, we consider the following 9 binary classification task: c vs
  e, g vs y, g vs s, m vs n, a vs g, i vs j, a vs o, f vs t, h vs
  n. Each image is represented by $8 \times 16$ binary pixels.
\item[\bf MNIST.]  This is a handwritten digit recognition dataset \footnote{\url{http://yann.lecun.com/exdb/mnist/}}, the    ata consists of images that represent digits. Each
  image is represented by 784 pixels.  We considered the following 5
  binary classification task: 2 vs 4, 0 vs 9, 3 vs 5, 1 vs 7, 6 vs 8.
\item[\bf USPS.] This dataset consists handwritten images from envelopes by the U.S. Postal Service. We considere the following 5 binary classification task: 2 vs 4, 0 vs 9, 3 vs 5, 1 vs 7, 6 vs 8. Each image is represented by
  256 pixels.
\item[\bf Vehicle.] We considered the vehicle classification
  problem in distributed sensor networks \citep{Duarte:2004:VCD:1034812.1034817}. We considered the following
  3 binary classification task: AAV vs DW, AAV vs noise, DW vs
  noise. There are 98,528 instances in total, each instances is
  described by 50 acoustic features and 50 seismic features.
\end{itemize}

\begin{figure}[htpb]
\begin{center}
\includegraphics[width=0.33 \textwidth]{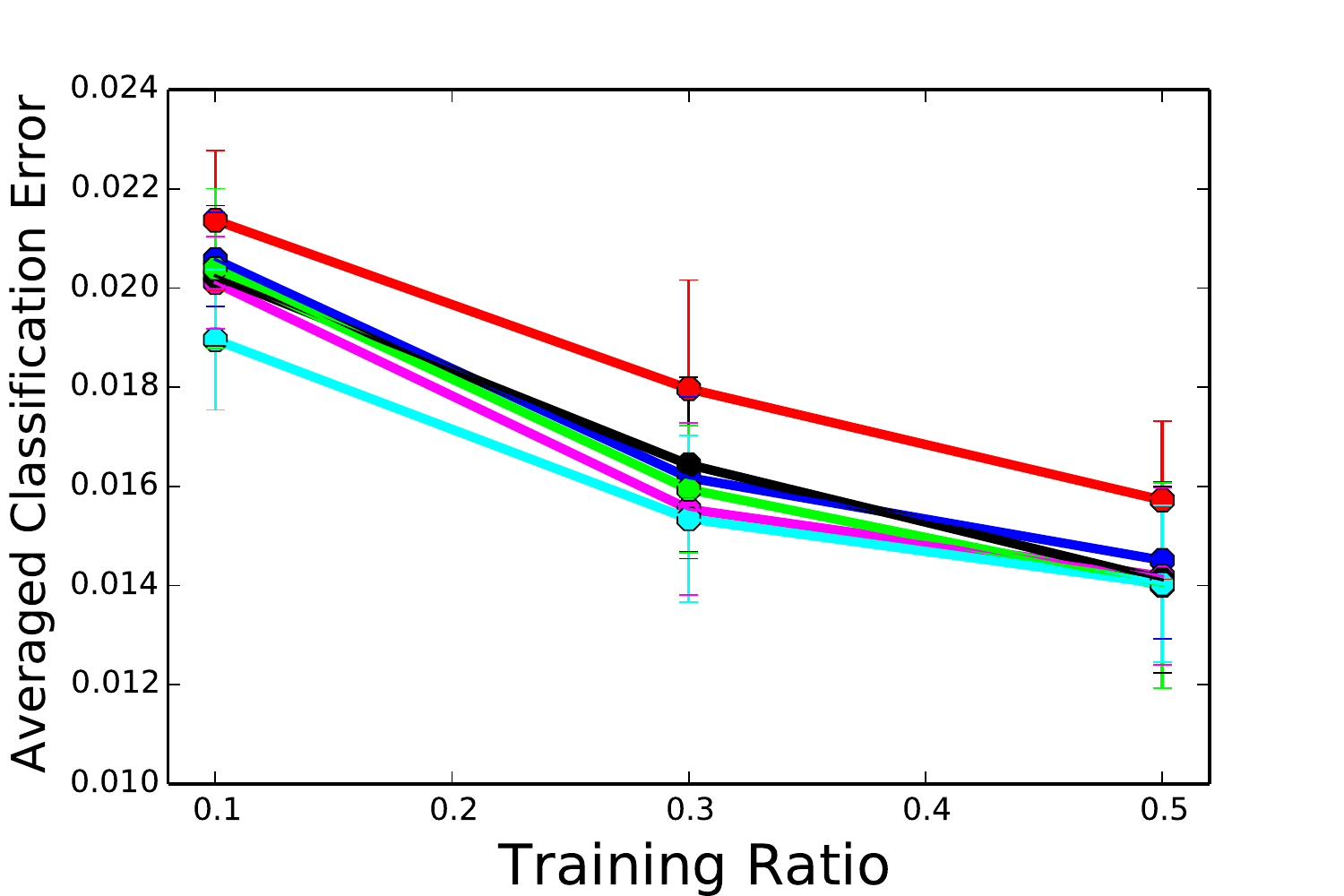}%
\includegraphics[width=0.33 \textwidth]{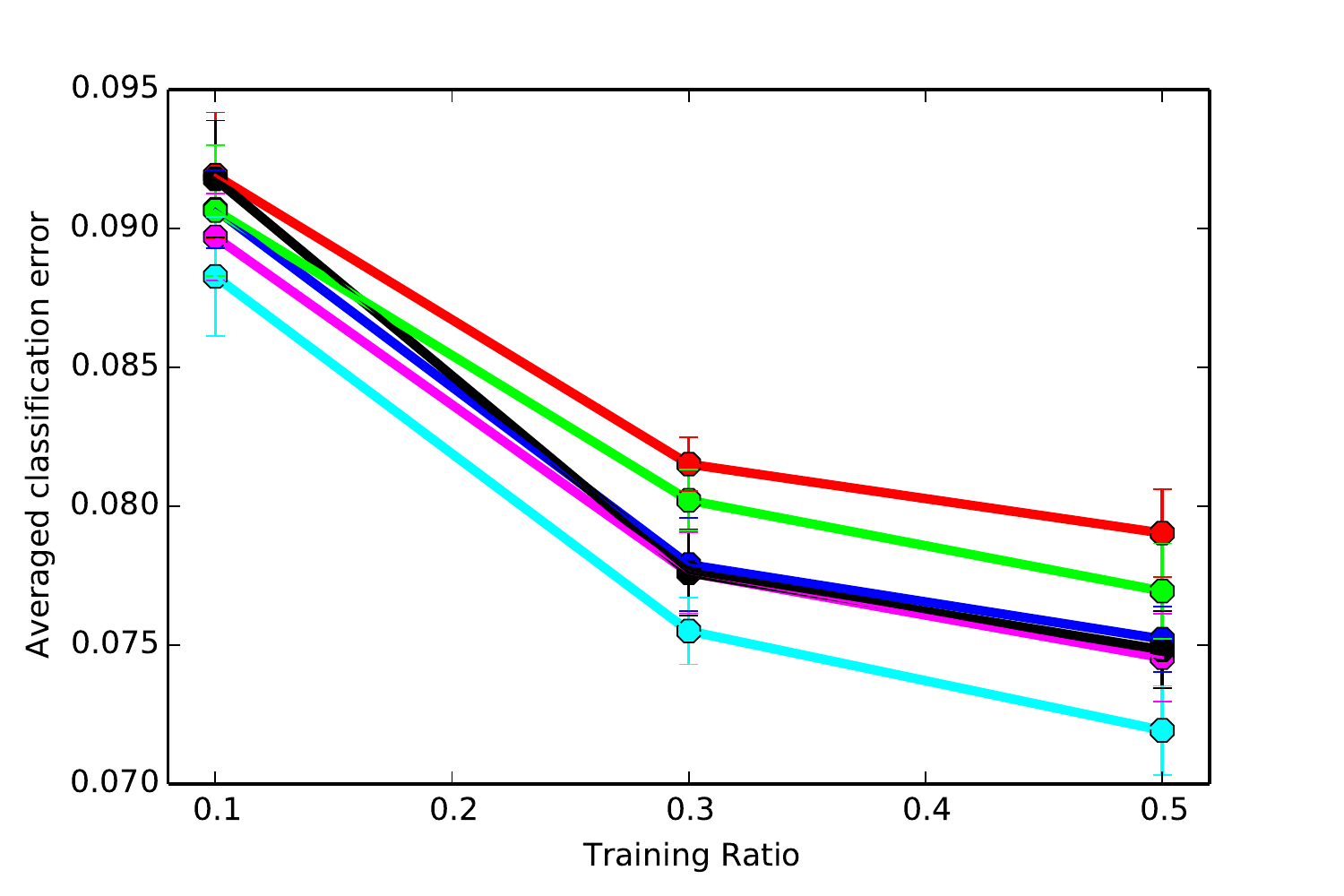}%
\includegraphics[width=0.33 \textwidth]{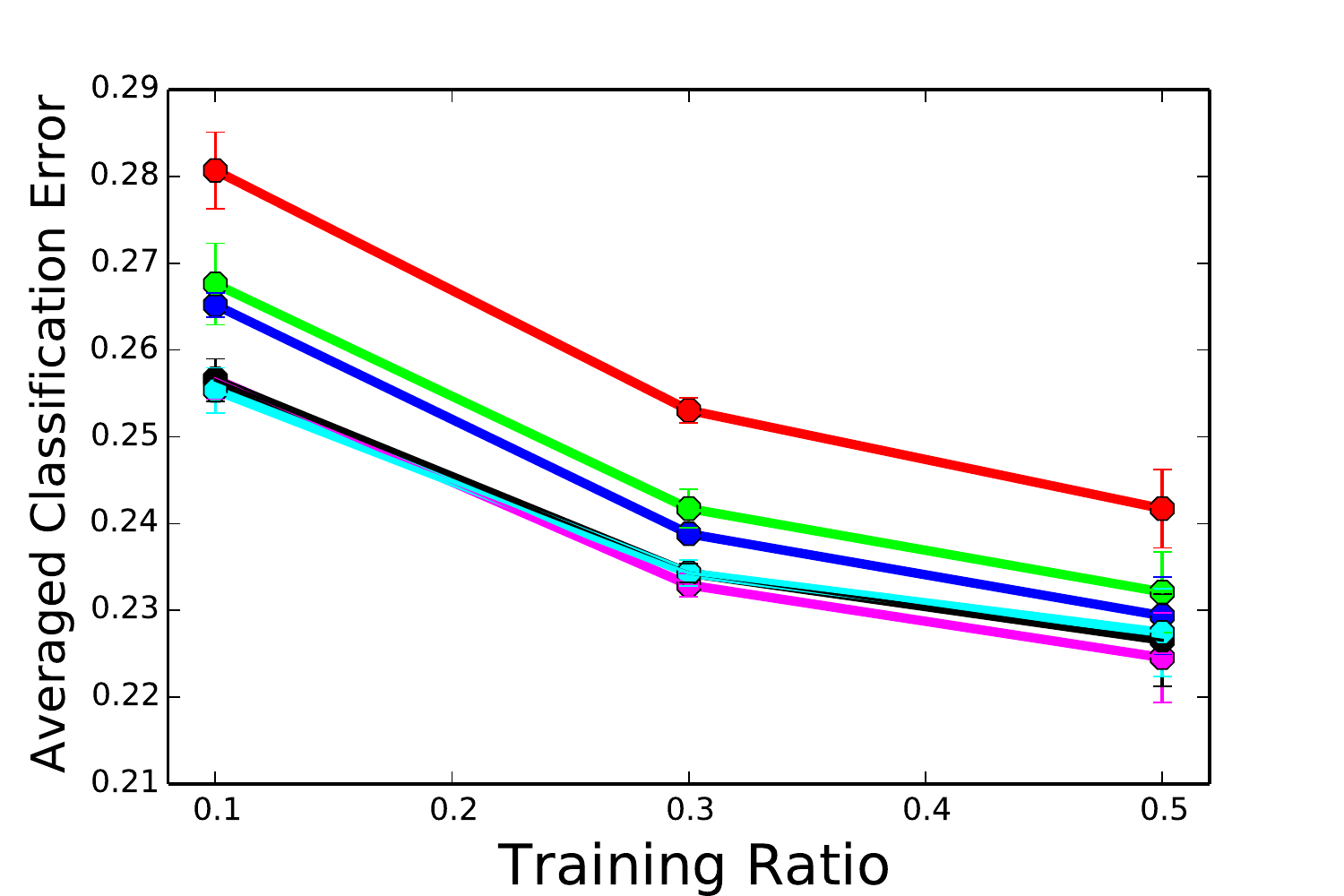}%
\end{center}
\makebox[0.33 \textwidth]{\textbf{USPS}}\makebox[0.33 \textwidth]{\textbf{OCR}}\makebox[0.33 \textwidth]{\textbf{Protein}}
\begin{center}
\includegraphics[width=0.33 \textwidth]{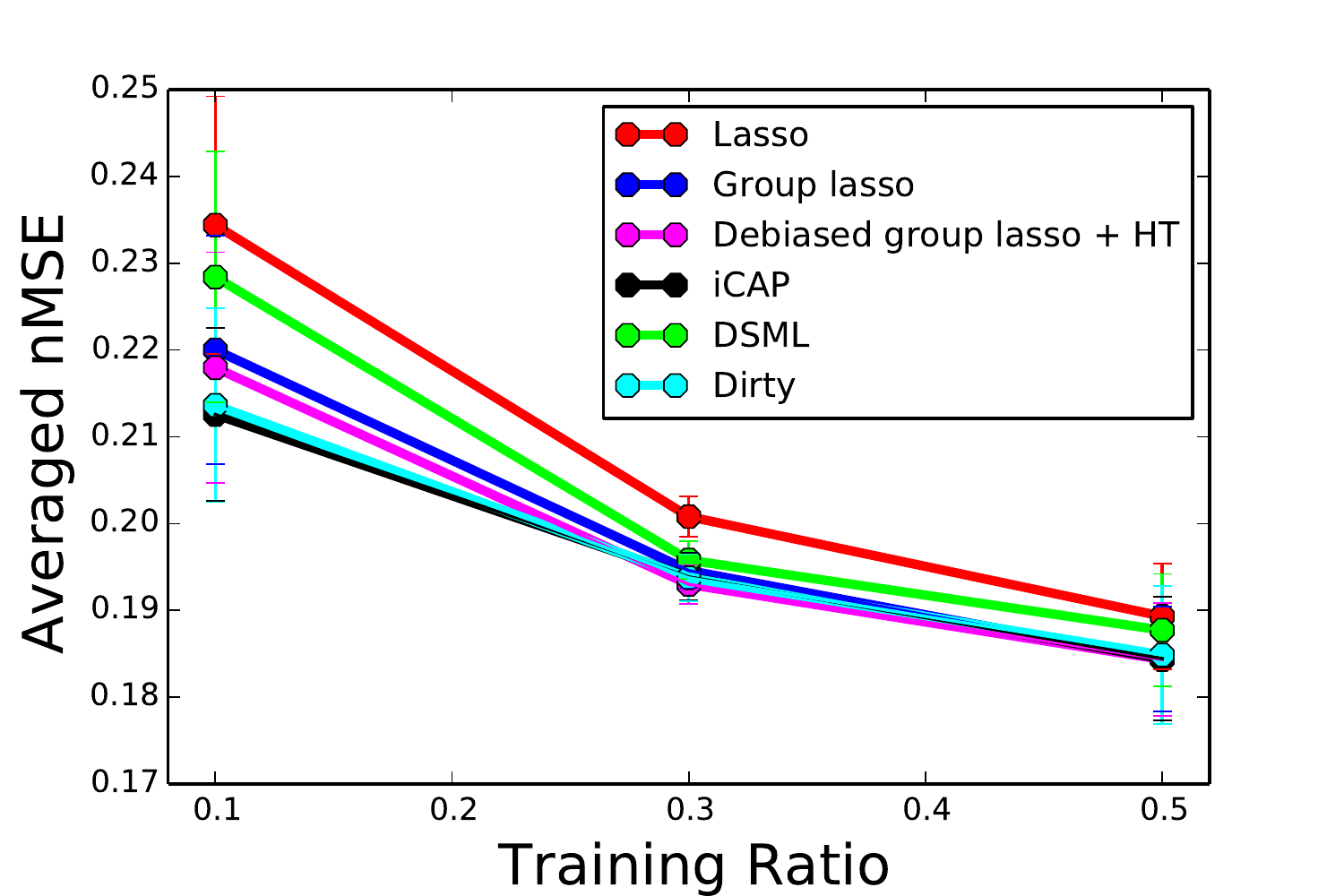}%
\includegraphics[width=0.33 \textwidth]{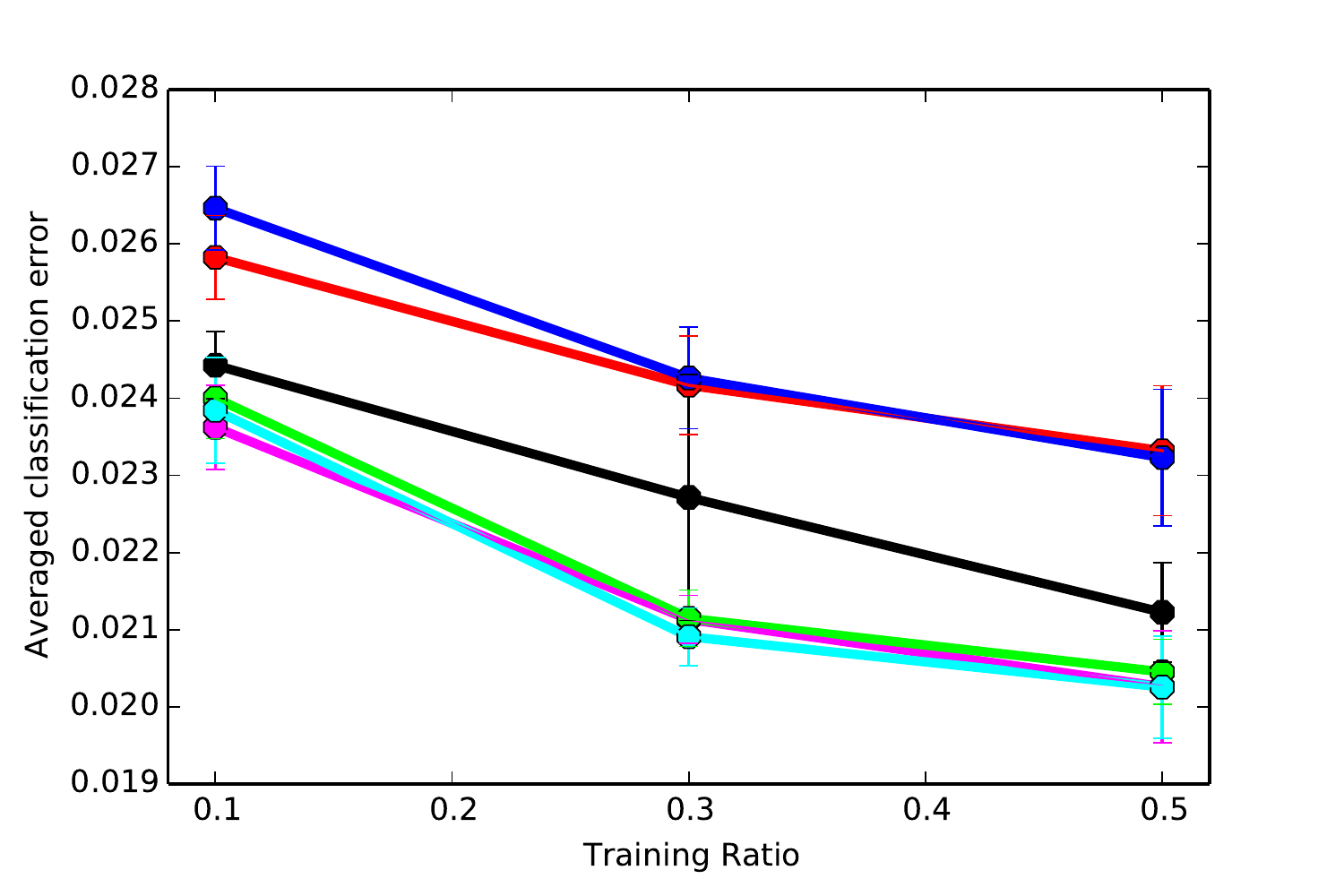}%
\includegraphics[width=0.33 \textwidth]{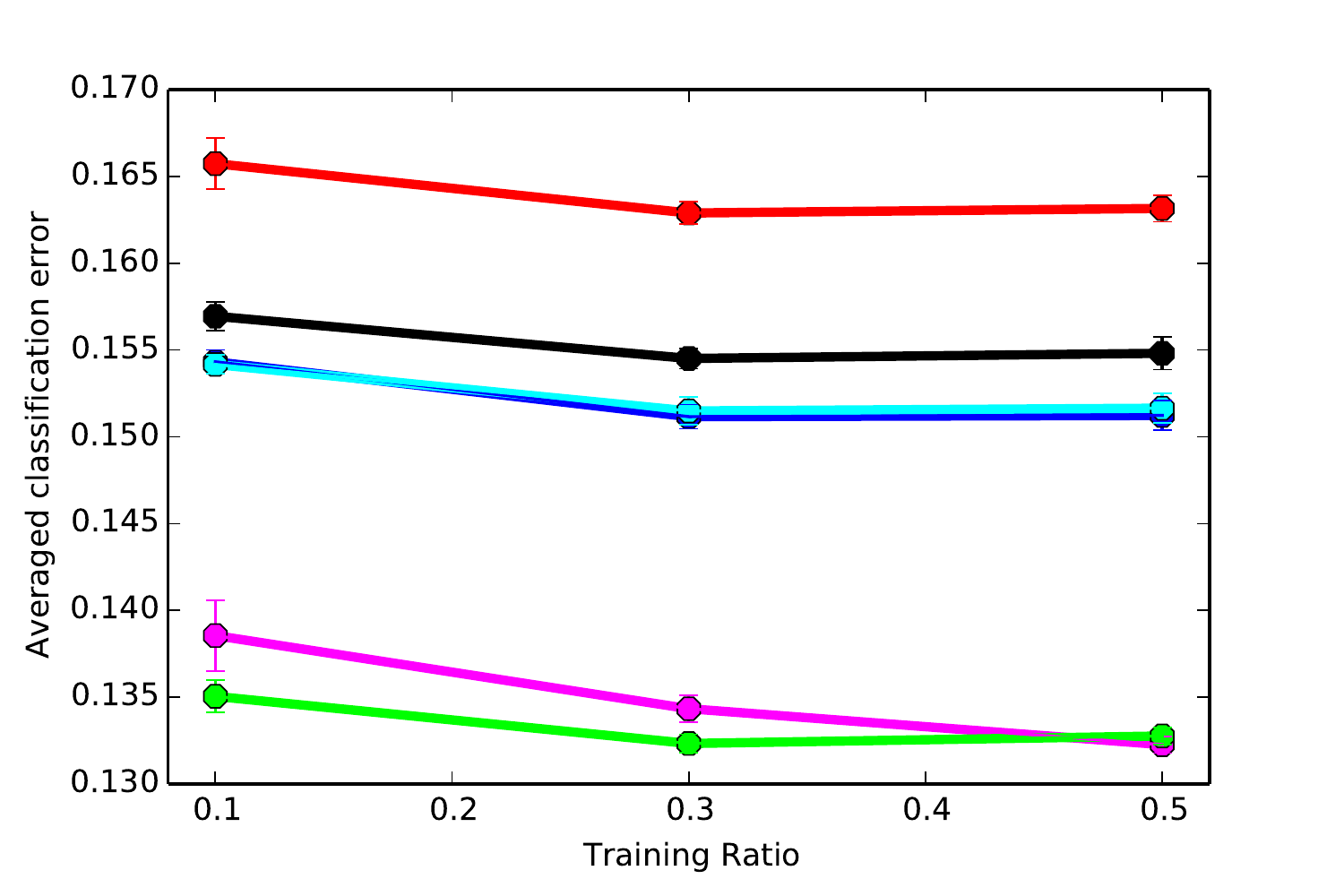}%
\end{center}
\makebox[0.33 \textwidth]{\textbf{School}}\makebox[0.335 \textwidth]{\textbf{MNIST}}\makebox[0.33 \textwidth]{\textbf{Vehicle}}
\caption{Comparison on real world datasets.}
\label{fig:real_data}
\end{figure}

In addition to the procedures used in the previous section, we also
compare against the dirty model \citet{jalali2010dirty}, as well as
the centralized approach that first debiases the group lasso and then
performs group hard thresholding as in \eqref{eq:hard_thr}.
Regularization and thresholding parameters were tuned on a held-out
set consisting of $20\%$ of the data.  In Figure \ref{fig:real_data}
we report results of training on $10\%$, $30\%$ and, $50\%$ of the
total data set size.  The multi-task methods clearly preform better
than the local lasso, with DSML achieving similar error as the
centralized methods.

\section{Discussion}

We introduced and studied a shared-sparsity distributed multi-task
learning problem.  We presented a novel communication-efficient
approach that required only one round of communication and achieves
provable guarantees that compete with the centralized approach to
leading order up to a generous bound on the number of machines.  Our
analysis was based on Restricted Eigenvalue and Generalized Coherence
conditions.  Such conditions, or other similar conditions, are
required for support recovery, but much weaker conditions are
sufficient for obtaining low prediction error with the lasso or group
lasso.  An interesting open question is whether there exists a
communication efficient method for distributed multi-task learning
that requires sample complexity $n=O(|S|+(\log p)/m)$, like the group
lasso, even without Restricted Eigenvalue and Generalized Coherence
conditions, or whether beating the $n=O(|S|+\log p)$ sample complexity
of the lasso in a more general setting inherently requires large
amounts of communication.  Our methods, certainly, rely on these
stronger conditions.

DSML can be easily extended to other types of structured sparsity,
including sparse group lasso \citep{friedman10note}, tree-guided group
lasso \citep{kim2010tree} and the dirty model \citep{jalali2010dirty}.
Going beyond shared sparsity, shared subspace (i.e.~low rank) and
other matrix-factorization and feature-learning methods are also
commonly and successfully used for multi-task learning, and it would
be extremely interesting to understand distributed multi-task learning
in these models.

\bibliographystyle{plainnat}
\bibliography{paper}

\section{Appendix}
\appendix
\section{Proof of Theorem 1}
We first introduce the following lemma.
\begin{lemma}
When the rows of $X_1,\ldots,X_t$ are independent subgaussian random vectors, with mean zero, covariance $\Sigma_1,...,\Sigma_t$, respectively. Let 
\[
C_M = \max_{t \in [m]} \max_{j \in [p]} \rbr{M_t^T \rbr{\frac{X_t^T X_t}{n}} M_t }_{jj}. 
\]
Then with probability at least $1 - 2mp \exp{(- cn)} - 2mp^{-2}$ for some constant $c$, we have
\[
C_M \leq 2 \max_{t \in [m]} \max_{j \in [p]} (\Sigma_{t}^{-1})_{jj}.
\]
\end{lemma}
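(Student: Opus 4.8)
The plan is to bound each diagonal entry $\rbr{M_t^T \hat\Sigma_t M_t}_{jj} = \hat m_{tj}^T \hat\Sigma_t \hat m_{tj}$, which is exactly the optimal value of the convex program defining the $j$-th row $\hat m_{tj}$ of $M_t$, by the value of that same quadratic objective at the population vector $\omega_{tj} := \Sigma_t^{-1} e_j$, and then to show that $\omega_{tj}^T \hat\Sigma_t \omega_{tj}$ concentrates around $\omega_{tj}^T \Sigma_t \omega_{tj} = \rbr{\Sigma_t^{-1}}_{jj}$; here $\hat\Sigma_t = n^{-1} X_t^T X_t$. Since $\rbr{\Sigma_t^{-1}}_{jj} \le K$ for all $t,j$, this yields $C_M \le 2K$ after a union bound.

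First I would show that $\omega_{tj}$ is feasible for the program with high probability. Because $\Sigma_t \omega_{tj} = e_j$, we have $\hat\Sigma_t \omega_{tj} - e_j = (\hat\Sigma_t - \Sigma_t)\omega_{tj}$, whose $k$-th coordinate is an average over $i \in [n]$ of the centered products $X_{t,ik}\langle X_{t,i}, \omega_{tj}\rangle - \EE[X_{t,ik}\langle X_{t,i}, \omega_{tj}\rangle]$ of subgaussian variables, hence a centered subexponential variable with parameter $O(\sigma_X^2 \norm{\omega_{tj}}_2)$. Bernstein's inequality, combined with $\norm{\omega_{tj}}_2 \le \lambda_{\min}^{-1}$, gives $\norm{(\hat\Sigma_t - \Sigma_t)\omega_{tj}}_\infty \lesssim \sigma_X^2 \lambda_{\min}^{-1}\sqrt{(\log p)/n}$ with probability $1 - O(p^{-2})$ after a union bound over $k \in [p]$, the linear term in Bernstein being dominated once $n \gtrsim \log p$ and accounting for the $\exp(-cn)$ probability. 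For the choice of $\mu$ used to build $M_t$ (of order $\sigma_X^2 \sqrt{(\log p)/n}$, as in \citep{Javanmard2013Confidence}), this makes $\omega_{tj}$ feasible, so by optimality $\hat m_{tj}^T\hat\Sigma_t\hat m_{tj} \le \omega_{tj}^T\hat\Sigma_t\omega_{tj}$.

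Next I would control the quadratic form $\omega_{tj}^T\hat\Sigma_t\omega_{tj} = n^{-1}\sum_{i}\langle X_{t,i},\omega_{tj}\rangle^2$, an average of i.i.d.\ subexponential variables with mean $\omega_{tj}^T\Sigma_t\omega_{tj}$. A second application of Bernstein's inequality (equivalently, Hanson--Wright along the fixed direction $\omega_{tj}$) gives $\big|\omega_{tj}^T\hat\Sigma_t\omega_{tj} - \omega_{tj}^T\Sigma_t\omega_{tj}\big| \lesssim \sigma_X^2\norm{\omega_{tj}}_2^2\big(\sqrt{(\log p)/n} + (\log p)/n\big)$ with probability $1 - O(p^{-2})$. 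Using $\norm{\omega_{tj}}_2^2 \le \lambda_{\min}^{-1}\rbr{\Sigma_t^{-1}}_{jj} = \lambda_{\min}^{-1}\,\omega_{tj}^T\Sigma_t\omega_{tj}$, this deviation is at most $\omega_{tj}^T\Sigma_t\omega_{tj}$ once $n$ exceeds an absolute multiple of $(\sigma_X^4/\lambda_{\min}^2)\log p$, hence $\omega_{tj}^T\hat\Sigma_t\omega_{tj} \le 2\,\omega_{tj}^T\Sigma_t\omega_{tj} = 2\rbr{\Sigma_t^{-1}}_{jj}$. Chaining the two steps, $\rbr{M_t^T\hat\Sigma_t M_t}_{jj} \le 2\rbr{\Sigma_t^{-1}}_{jj} \le 2K$ for each fixed $(t,j)$; choosing the constants in the deviation levels so that each of the $mp$ pairs $(t,j)$ fails with polynomial probability $O(p^{-3})$ and subexponential probability $O(\exp(-cn))$, a union bound over $j \in [p]$ and $t \in [m]$ produces the claimed $1 - 2mp\exp(-cn) - 2mp^{-2}$.

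The main obstacle is the quadratic-form step. In the high-dimensional regime $p \ge n$ the operator norm $\norm{\hat\Sigma_t - \Sigma_t}_{\mathrm{op}}$ does not shrink, so one cannot bound $\omega_{tj}^T(\hat\Sigma_t-\Sigma_t)\omega_{tj} \le \norm{\omega_{tj}}_2^2\,\norm{\hat\Sigma_t-\Sigma_t}_{\mathrm{op}}$ usefully; what rescues the argument is that $\omega_{tj} = \Sigma_t^{-1}e_j$ is a fixed, data-independent direction, so a scalar subexponential concentration applies and, crucially, delivers a multiplicative bound on $\omega_{tj}^T\hat\Sigma_t\omega_{tj}$ relative to its mean, needing only $\lambda_{\min} > 0$. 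A secondary point to handle carefully is consistency between the deviation level appearing here and the magnitude of $\mu$ that guarantees feasibility of $\omega_{tj}$ in the first step, since both are of order $\sigma_X^2\sqrt{(\log p)/n}$ but may carry different constants.
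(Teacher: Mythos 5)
Your proposal is correct and follows essentially the same route as the paper: establish that $\Sigma_t^{-1}$ (row by row) is feasible for the program defining $M_t$ with high probability (the paper cites Theorem 2.4 of \citep{Javanmard2013Confidence} for this), use optimality to bound $(M_t^T\hat\Sigma_t M_t)_{jj}$ by $(\Sigma_t^{-1}\hat\Sigma_t\Sigma_t^{-1})_{jj}$, and then apply sub-exponential concentration of the fixed-direction quadratic form to get the factor of $2$, finishing with a union bound over $t$ and $j$. The only difference is that you spell out the Bernstein-type arguments that the paper delegates to cited results, so no gap.
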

\begin{proof}
As shown in Theorem 2.4 of \citep{Javanmard2013Confidence}, $\Sigma_t^{-1}$ will be a feasible solution for the problem of estimating $M_t$. Since we're minimizing ${(M_t^T \hat \Sigma_t M_t)}_{jj}$, we must have
\[
\max_{j \in [p]} {(M_t^T \hat \Sigma_t M_t)}_{jj} \leq  \max_{j \in [p]} {(\Sigma_t^{-1} \hat \Sigma_t \Sigma_t^{-1})}_{jj}.
\]
Based on the concentration results of sub-exponential random variable \citep{Vershynin2012Introduction}, also Lemma 3.3 of \citep{lee2015communication}, we know with probability at least $1 - 2p \exp{(- cn)}$ for some constant $c$, we have
\[
\max_{j \in [p]} {(\Sigma_t^{-1} \hat \Sigma_t \Sigma_t^{-1})}_{jj} \leq 2 \max_{j \in [p]} (\Sigma_t^{-1})_{jj}.
\]
Take an union bound over $t \in [m]$, we obtain with probability at least $1 - 2mp \exp{(- cn)}$,
\[
C_M \leq \max_{t \in [m]} \max_{j \in [p]} {(M_t^T \hat \Sigma_t M_t)}_{jj} \leq  \max_{t \in [m]} \max_{j \in [p]} {(\Sigma_t^{-1} \hat \Sigma_t \Sigma_t^{-1})}_{jj} \leq 2 \max_{t \in [m]} \max_{j \in [p]} (\Sigma_{t}^{-1})_{jj}.
\]
\end{proof}

Now we are ready to prove Theorem 1, recall the model assumption
\begin{equation}
  \label{eq:model}
  y_t = X_t \beta^*_t + \epsilon_t,\qquad
  t=1,\ldots,m,
\end{equation}
and the debiased estimation
\begin{align}
\label{eqn:debiasing}
\hat \beta_t^u = 
\hat \beta_t + n^{-1} M_t X_t^T (y_t - X_t \hat \beta_t ),
\end{align}
we have
\begin{align*}
\hat \beta^u_t =& 
\hat \beta_t + \frac{1}{n} M_t X_t^T ( X_t \beta^*_t - X_t \hat \beta_t ) + \frac{1}{n} M_t X_t^T \epsilon_t \\
=& \beta_t^* + (M_t \hat \Sigma_t - I)( \beta^*_t - \hat \beta_t) + \frac{1}{n} M_t X_t^T \epsilon_t.
\end{align*}
For the term $(M_t \hat \Sigma_t - I)(\beta^*_t - \hat \beta_t)$, 
define 
\[
C_{\mu} = 10 e \sigma_X^4 \sqrt{\frac{\lambda_{\max}}{\lambda_{\min}}} ,
\]
we
have the following bound
\begin{equation}
\begin{aligned}
\norm{(M_t \hat \Sigma_t - I)(\beta^*_t - \hat \beta_t)}_{\infty} 
\leq& \max_j \norm{\hat \Sigma_t m_{tj} - e_j }_{\infty} \norm{\beta^*_t - \hat \beta_t}_1  \\
\leq_P& C_{\mu} \sqrt{\frac{\log p}{n}} \cdot \frac{16 A}{\kappa} \sigma |S| \sqrt{\frac{\log p}{n}} \\
=& \frac{16 A C_{\mu} \sigma |S| \log p}{\kappa n}.
\label{eqn:term_bias}
\end{aligned}
\end{equation}
Noticed that 
\[
n^{-1} M_t X_t^T \epsilon_t \sim \Ncal \left(0,\frac{\sigma^2 M_t \hat \Sigma_t {M_t}^T}{n} \right).
\]
Our next step uses a result on the concentration of $\chi^2$ random
variables. For any coordinate $j$, we have
\[
\sum_{i=1}^m \rbr{ n^{-1} e_j^T M_t X^T \epsilon_t }^2 \leq \frac{C_M^2 \sigma^2}{n} \sum_{i=1}^m \xi_i^2,
\]
where $(\xi_i)_{i \in [m]}$ are standard normal random
variables. Using Lemma \ref{lemma:chi_squared} with a weight vector 
\[
v = \rbr{\frac{C_M^2 \sigma^2}{n}, \frac{C_M^2 \sigma^2}{n}, \ldots, \frac{C_M^2 \sigma^2}{n}}
\]
and choosing $t = \sqrt{m} + \frac{\log p}{\sqrt{m}}$, we have
%  We get the
% quatity $\eta_v = \frac{ \rbr{\frac{C_M^2 \sigma^2}{n}} \sum_{i=1}^m
%   \xi_i^2}{\sqrt{2m} \rbr{\frac{C_M^2 \sigma^2}{n}}} -
% \sqrt{\frac{m}{2}}$, and $m(v) = \frac{1}{\sqrt{m}}$. By choosing $t$
% in Lemma \ref{lemma:chi_squared} as
\[
P \cbr{ \frac{ \rbr{\frac{C_M^2 \sigma^2}{n}} \sum_{i=1}^m \xi_i^2}{\sqrt{2m} \rbr{\frac{C_M^2 \sigma^2}{n}}} - \sqrt{\frac{m}{2}} > \sqrt{m} + \frac{\log p}{\sqrt{m}}} \leq 2 \exp \rbr{ - \frac{\rbr{\sqrt{m} + \frac{\log p}{\sqrt{m}}}^2}{2 + 2 \sqrt{2} (1 + \frac{\log p}{m}) } }.
\]
A union bound over all $j \in [p]$ gives us that with  probability at least $1 - p^{-1}$
\begin{align}
\sum_{i \in [m]}
\rbr{ n^{-1}e_j^T M_t X^T \epsilon_t }^2 
\leq 3m \rbr{\frac{C_M^2 \sigma^2}{n}} + \sqrt{2} \log p \rbr{\frac{C_M^2 \sigma^2}{n}}, 
\qquad \forall j \in [p].
\label{eqn:term_chisquared}
\end{align}
Combining \eqref{eqn:term_bias} and \eqref{eqn:term_chisquared}, 
we get the following estimation error bound:
\begin{equation}
  \begin{aligned}
\norm{\hat B_{j} - B_j}_2 
=& \sqrt{\sum_{i \in [m]} \left([ M_t \hat \Sigma_t - I)(\beta^*_t - \hat \beta_t)]_j + \sbr{n^{-1} M_t X_t^T \epsilon_t }_j  \right)^2}
 \\
\leq& \sqrt{\sum_{i \in [m]} 2 \left([ M_t \hat \Sigma_t - I)(\beta^*_t - \hat \beta_t)]_j^2 +\sbr{n^{-1} M_t X_t^T \epsilon_t }_j^2  \right)}  \\
\leq& \sqrt{\sum_{i \in [m]} \left( \frac{512 A^2 C_{\mu}^2 \sigma^2 |S|^2 (\log p)^2}{\kappa^2 n^2}\right) + 6m \rbr{\frac{C_M^2 \sigma^2}{n}} + 2\sqrt{2} \log p \rbr{\frac{C_M^2 \sigma^2}{n}} }  \\
=& \frac{\sigma}{\sqrt{n}} \sqrt{\frac{512 A^2 C_{\mu}^2 m |S|^2 (\log p)^2 }{\kappa^2 n} + 6 C_M^2 m + 2\sqrt{2} C_M^2 \log p  } \\
\leq& \frac{91 C_{\mu} \sigma |S| \sqrt{m} \log p }{\kappa n} + 3 C_M \sigma \sqrt{\frac{m + \log p}{n}},
\label{eqn:bound_sc}
\end{aligned}
\end{equation}
where the first inequality uses the fact $(a+b)^2 \leq 2a^2 + 2b^2$,
and the second inequality uses \eqref{eqn:term_bias} and \eqref{eqn:term_chisquared}), the last inequality uses the fact that $\sqrt{a+b} \leq \sqrt{a} + \sqrt{b}$. For every variable $j \not\in S$, we have 
\begin{align*}
\norm{\hat B_j}_2 \leq \frac{91 C_{\mu} \sigma |S| \sqrt{m} \log p }{\kappa n} + 3 C_M \sigma \sqrt{\frac{m + \log p}{n}}.
\end{align*}
plug in $\kappa \geq \frac{1}{2} \lambda_{\min}$, $C_{\mu} = 10 e \sigma_X^4 \sqrt{\frac{\lambda_{\max}}{\lambda_{\min}}}, C_M \leq 2K$, we obtain
\[
\norm{\hat B_j}_2 \leq \frac{1820 e \sigma_X^4 \lambda_{\max}^{1/2}  \sigma |S| \sqrt{m} \log p}{\lambda_{\min}^{3/2} n} + 6 K \sigma \sqrt{\frac{m + \log p}{n}}.
\]
From \eqref{eqn:bound_sc} and the choice of $\Lambda^*$, we see that
all variables not in $S$ will be excluded from $\hat S$ as well. For
every variable $j \in S$, we have
\begin{align*}
\norm{\hat B_j}_2 \geq \norm{ B_j}_2 - \norm{\tilde B_{j} - B_j}_2 \geq 2 \Lambda^* - \Lambda^* = \Lambda^*.
\end{align*}
Therefore, all variables in $S$ will correctly stay in $\hat S$ after the
group hard thresholding.

\section{Proof of Corollary~\label{cor:prediction_estimation}}

From Theorem 2 we have that $\hat S(\Lambda^*) = S$ and 
\begin{align}
\norm{\tilde B_{j} - B_j}_2 
\leq \frac{1820 e \sigma_X^4 \lambda_{\max}^{1/2}  \sigma |S| \sqrt{m} \log p}{\lambda_{\min}^{3/2} n} + 6 K \sigma \sqrt{\frac{m + \log p}{n}},
\label{eqn:row_error}
\end{align}
with high probability. Summing over $j \in S$, we obtain the $\ell_1/\ell_2$ estimation
error bound. For the prediction risk bound, we have
\begin{align*}
\frac{1}{nm} \sum_{t=1}^m \norm{X_t (\tilde \beta_t - \beta^*_t)  }_2^2 
\leq& \frac{\lambda_{\max}}{m}  \sum_{i=1}^m \norm{\tilde \beta_t - \beta^*_t}_2^2 \\
=& \frac{\lambda_{\max}}{m} \sum_{j=1}^p \norm{\tilde B_j - B_j}_2^2.
\end{align*}
Using \eqref{eqn:row_error} and the fact that $\tilde B - B$ is
row-wise $|S|$-sparse, we obtain the prediction risk bound.

\section{Collection of known results}

For completeness, we first give the definition of subgaussian norm, details could be found at \citep{Vershynin2012Introduction}. 

\begin{definition}[Subgaussian norm]
The subgaussian norm $\norm{X}_{\psi_2}$ of a subgaussian $p$-dimensional random vector $X$, is defined as 
\[
\norm{X}_{\psi_2} = \sup_{x \in \mathbb{S}^{p-1}} \sup_{q > 1} q^{-1/2}  (\EE |\dotp{X}{x}|^q)^{1/q},
\]
where $\mathbb{S}^{p-1}$ is the $p$-dimensional unit sphere.
\end{definition}

We then define the restricted set $\Ccal(|S|,3)$ as 
\[
\Ccal(|S|,3) = \{ \Delta \in \RR^p | \norm{\Delta_{U^c}}_1 \leq 3 \norm{\Delta_U}_1, U \subset [p], |U| \leq |S| \}.
\]

The following proposition is a simple extension of Theorem 6.2 in
\citep{Bickel2009Simultaneous}.

\begin{proposition}
Let
\[
\lambda_t = A \sigma \sqrt{\frac{\log p}{n}}
\]
with some constant $A > 2 \sqrt{2}$ be the regularization parameter in
lasso. With probability at least $1 - mp^{1 - A^2/8}$,
\[
\norm{\hat \beta_t - \beta^*_t}_1 \leq \frac{16 A}{\kappa'} \sigma |S| \sqrt{\frac{\log p}{n}},
\]
\end{proposition}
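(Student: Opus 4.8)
The plan is to reproduce, one task at a time, the classical $\ell_1$-error bound for the lasso (Theorem~6.2 of \citep{Bickel2009Simultaneous}) and then take a union bound over the $m$ tasks; this union is the only ``extension'' needed, and it is harmless because the datasets $(X_t,y_t)$ are independent. Fix a task $t$ and write $\Delta = \hat\beta_t - \beta^*_t$. Optimality of $\hat\beta_t$ for \eqref{eqn:local_lasso} together with feasibility of $\beta^*_t$, after substituting $y_t = X_t\beta^*_t + \epsilon_t$, gives the basic inequality
\[
\frac{1}{n}\norm{X_t\Delta}_2^2 \;\le\; \frac{2}{n}\epsilon_t^T X_t\Delta \;+\; \lambda_t\left(\norm{\beta^*_t}_1 - \norm{\hat\beta_t}_1\right).
\]

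From here the argument is the standard three moves. \emph{(Noise event.)} Each coordinate of $n^{-1}X_t^T\epsilon_t$ is Gaussian with variance $\sigma^2(\hat\Sigma_t)_{jj}/n$, and subgaussianity of the rows of $X_t$ guarantees $\max_j(\hat\Sigma_t)_{jj} = O(1)$ with probability $1 - O(p\,e^{-cn})$; a Gaussian maximal inequality over $j\in[p]$ and $t\in[m]$ then shows that, with probability at least $1 - mp^{1-A^2/8}$, the stochastic term $\tfrac{2}{n}\epsilon_t^T X_t\Delta$ is bounded by a fixed fraction of $\lambda_t\norm{\Delta}_1$ simultaneously for all tasks. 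The exponent $A^2/8$ is exactly the Gaussian tail at the scale $\lambda_t = A\sigma\sqrt{\log p/n}$, and the hypothesis $A>2\sqrt{2}$ is what forces this probability to approach $1$. \emph{(Cone membership.)} On that event, feeding the noise bound into the basic inequality and using $\norm{\hat\beta_t}_1 \ge \norm{\beta^*_{t,S}}_1 - \norm{\Delta_S}_1 + \norm{\Delta_{S^c}}_1$ (valid since $\beta^*_t$ vanishes off $S$), then rearranging and discarding the nonnegative left-hand side, yields $\norm{\Delta_{S^c}}_1 \le 3\norm{\Delta_S}_1$, i.e.\ $\Delta \in \Ccal(|S|,3)$. \emph{(Restricted eigenvalue.)} Now keep the quadratic term, bound $\norm{\Delta_S}_1 \le \sqrt{|S|}\,\norm{\Delta_S}_2$, and invoke the restricted eigenvalue condition $\tfrac{1}{n}\norm{X_t\Delta}_2^2 \ge \kappa'\norm{\Delta_S}_2^2$ valid on $\Ccal(|S|,3)$ to obtain $\norm{\Delta_S}_2 \lesssim \lambda_t\sqrt{|S|}/\kappa'$, hence $\norm{\Delta}_1 \le 4\norm{\Delta_S}_1 \le 4\sqrt{|S|}\,\norm{\Delta_S}_2 \lesssim \lambda_t|S|/\kappa'$. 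Substituting $\lambda_t = A\sigma\sqrt{\log p/n}$ and carrying the (deliberately loose) numerical constants through — which also absorb the $O(1)$ bound on $\max_j(\hat\Sigma_t)_{jj}$ — produces $\norm{\hat\beta_t - \beta^*_t}_1 \le \tfrac{16A}{\kappa'}\sigma|S|\sqrt{\log p/n}$.

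I expect no real difficulty in the lasso estimate itself, which is textbook. The two points needing attention are: first, the union bound over tasks, which turns the classical single-task probability $p^{1-A^2/8}$ into $mp^{1-A^2/8}$ and is immediate because $\epsilon_1,\dots,\epsilon_m$ are independent; and second, the random-design aspect, since both the variance bound $\max_j(\hat\Sigma_t)_{jj} = O(1)$ used above and the validity of the restricted eigenvalue constant $\kappa'$ for the \emph{sample} Gram matrix $\hat\Sigma_t$ (rather than the population $\Sigma_t$) must themselves be established with high probability. These are standard subgaussian concentration facts — see, e.g., Lemma~3.3 of \citep{lee2015communication} and Theorem~2.4 of \citep{Javanmard2013Confidence} — and they cost only an additional $\exp(-cn)$-type term that is lower order and does not affect the stated failure probability $mp^{1-A^2/8}$. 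The only ``obstacle,'' such as it is, is bookkeeping the constants so that the clean form $16A/\kappa'$ emerges, which reduces to choosing how much of $\lambda_t$ is spent dominating the noise versus driving the cone inequality; nothing here is conceptually deep.
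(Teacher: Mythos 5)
Your argument is exactly the one the paper relies on: the paper's proof is a one-line citation of Theorem~6.2 of \citep{Bickel2009Simultaneous} followed by a union bound over the $m$ tasks, and your write-up simply unpacks that cited theorem (basic inequality, noise event at scale $\lambda_t$ giving the $p^{1-A^2/8}$ tail, cone membership, restricted eigenvalue) before applying the same union bound. The proposal is correct and takes essentially the same route, just with the intermediate steps made explicit.
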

where $\kappa$ is the minimum restricted eigenvalue of design matrix $X_1,\ldots,X_m$:
\[
\kappa = \min_{t \in [m]} \min_{\Delta \in \Ccal(|S|,3)} \frac{\Delta^T \rbr{\frac{X_t^T X_t}{n}} \Delta}{\norm{\Delta_S}_2^2}.
\]
\begin{proof}
Using Theorem 6.2 in
\citep{Bickel2009Simultaneous} and take an union bound over $1,\ldots,m$ we obtain the result.
\end{proof}

\begin{lemma}[Equation (27) in \citep{cavalier2002oracle}; Lemma B.1 in \citep{Lounici2011Oracle}]
\label{lemma:chi_squared}
Let $\xi_1,\xi_2,...\xi_m$ be i.i.d. standard normal random variables,
let $v = (v_1,...,v_m) \neq 0$, $\eta_{v} = \frac{1}{\sqrt{2}
  \norm{v}_2} \sum_{i=1}^m (\xi_i^2 - 1)v_i$ and $m(v) =
\frac{\norm{v}_{\infty}}{\norm{v}_{2}}$. We have, for all $t > 0$,
that
\[
P(|\eta_v| > t) \leq 2 \exp \rbr{ - \frac{t^2}{2 + 2 \sqrt{2} t m(v)} }.
\] 
\end{lemma}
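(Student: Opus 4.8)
The plan is to run the classical Chernoff/exponential-moment argument for a weighted sum of centered $\chi^2_1$ variables, tracking constants so that the Bernstein-type denominator comes out exactly as stated.

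First I would record the moment generating function of one centered squared Gaussian: for $\xi\sim N(0,1)$ and $s<1/2$, $\EE[e^{s(\xi^2-1)}]=e^{-s}(1-2s)^{-1/2}$, so $\psi(s):=\log\EE[e^{s(\xi^2-1)}]=-s-\tfrac12\log(1-2s)$. The key elementary bound is $\psi(s)\le \frac{s^2}{1-2|s|}$ for all $|s|<1/2$: for $0\le s<1/2$ expand $-\tfrac12\log(1-2s)=\sum_{k\ge1}\frac{(2s)^k}{2k}$, which gives $\psi(s)=\sum_{k\ge2}\frac{2^{k-1}s^k}{k}\le\tfrac12\sum_{k\ge2}2^{k-1}s^k=\frac{s^2}{1-2s}$; for $s<0$ the inequality $\log(1+x)\ge x-\tfrac{x^2}{2}$ yields $\psi(s)\le s^2\le\frac{s^2}{1-2|s|}$. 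In particular $\chi^2_1-1$ is sub-exponential.

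Next, writing $Z=\sum_{i=1}^m(\xi_i^2-1)v_i$ so that $\eta_v=Z/(\sqrt2\,\norm{v}_2)$, independence gives $\log\EE[e^{\lambda Z}]=\sum_i\psi(\lambda v_i)\le\sum_i\frac{(\lambda v_i)^2}{1-2|\lambda v_i|}\le\frac{\lambda^2\norm{v}_2^2}{1-2|\lambda|\,\norm{v}_\infty}$ for $|\lambda|<1/(2\norm{v}_\infty)$. This is the standard Bernstein form with variance parameter $\nu=2\norm{v}_2^2$ (indeed $\mathrm{Var}(Z)=2\norm{v}_2^2$) and scale $2\norm{v}_\infty$. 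I would then apply the Chernoff bound with the minimizing choice $\lambda=u/(2\norm{v}_2^2+2\norm{v}_\infty u)$, which is admissible in the allowed range, to get $P(Z\ge u)\le\exp\rbr{-\frac{u^2}{2(2\norm{v}_2^2+2\norm{v}_\infty u)}}$, and the same bound for $P(Z\le -u)$ using $\lambda<0$. Substituting $u=\sqrt2\,\norm{v}_2\,t$ turns the exponent into $-\frac{t^2}{2+2\sqrt2\,t\,m(v)}$, and a union bound over the two tails supplies the factor $2$.

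I expect the only genuine work to be the elementary inequality $\psi(s)\le s^2/(1-2|s|)$ with these precise constants, together with the bookkeeping in the Chernoff optimization needed to land exactly on the denominator $2+2\sqrt2\,t\,m(v)$ rather than a looser one; the tensorization step and the symmetric lower-tail argument are routine.
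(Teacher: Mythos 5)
The paper does not prove this lemma at all: it appears in the ``Collection of known results'' appendix and is imported by citation from \citet{cavalier2002oracle} and \citet{Lounici2011Oracle}. Your proposal therefore supplies a proof the paper omits, and it is correct; it is also essentially the same exponential-moment argument that underlies the cited results. I checked the constants: $\psi(s)=-s-\tfrac12\log(1-2s)=\sum_{k\ge2}\tfrac{2^{k-1}s^k}{k}\le \tfrac{s^2}{1-2s}$ for $0\le s<1/2$, and $\psi(s)\le s^2$ for $s<0$, so $\psi(s)\le s^2/(1-2|s|)$ holds on $|s|<1/2$; tensorization gives $\log\EE e^{\lambda Z}\le \lambda^2\norm{v}_2^2/(1-2|\lambda|\norm{v}_\infty)$, which is the Bernstein form with $\nu=2\norm{v}_2^2$, $b=2\norm{v}_\infty$; the choice $\lambda=u/(\nu+bu)$ lies in $(0,1/b)$ and yields $P(Z\ge u)\le\exp\rbr{-u^2/(2\nu+2bu)}$; and substituting $u=\sqrt2\,\norm{v}_2 t$ gives exactly the exponent $-t^2/(2+2\sqrt2\,t\,m(v))$, with the factor $2$ from the two tails. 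Two minor remarks: on the negative branch you in fact get the stronger sub-Gaussian bound $\psi(s)\le s^2$, so the lower tail needs no Bernstein correction (the stated symmetric bound is just the weaker of the two); and the lemma as stated uses the strict event $\{|\eta_v|>t\}$, which is contained in $\{|\eta_v|\ge t\}$, so proving the bound for the latter suffices.
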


The next lemma relies on the generalized coherence parameter:
\begin{definition}[Generalized Coherence]
For matrices $X\in\RR^{n\times p}$ 
and $M = (m_1,\ldots,m_p)\in\RR^{p\times p}$, let 
\[
\mu(X,M) = \max_{j\in[p]} \norm{\Sigma m_j - e_j }_{\infty}
\]
be the generalized coherence parameter between $X$ and $M$, where
$\Sigma = n^{-1}X^TX$. Furthermore, let 
$\mu^* = \min_{t\in[m]}\min_{M\in\RR^{p\times p}}\mu(X_t,M)$ be the
minimum generalized coherence.
\end{definition}

\begin{lemma}[Theorem 2.4 in \citep{Javanmard2013Confidence}] 
\label{lemma:c_mu}
When $X_t$ are drawn from subgaussian random vectors with covariance matrix $\Sigma_t$, and $X_t \Sigma^{-1/2}_t$ has bounded subgaussian norm $\norm{X_t \Sigma^{-1/2}_t}_{\psi_2} \leq \sigma_X$. When $n \geq 24 \log p$, then with probability at least $1 - 2p^{-2}$, we have
\[
\mu(X_t,\Sigma_t^{-1}) < 10 e \sigma_X^4 \sqrt{\frac{\lambda_{\max}}{\lambda_{\min}}} \sqrt{\frac{\log p}{n}}.
\]
\end{lemma}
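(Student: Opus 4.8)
The plan is to read $\mu(X_t,\Sigma_t^{-1})$ as a uniform deviation of the empirical second moments from their population counterparts, and then to control each of the $p^2$ relevant entries by a sub-exponential Bernstein bound followed by a union bound. Write $\hat\Sigma_t = n^{-1}X_t^TX_t$ and let $(\Sigma_t^{-1})_j$ denote the $j$-th column of the population inverse $\Sigma_t^{-1}$. Since $\Sigma_t(\Sigma_t^{-1})_j = e_j$ holds exactly, the residual telescopes,
\[
\hat\Sigma_t(\Sigma_t^{-1})_j - e_j = (\hat\Sigma_t - \Sigma_t)(\Sigma_t^{-1})_j,
\]
so that
\[
\mu(X_t,\Sigma_t^{-1}) = \max_{j\in[p]}\norm{(\hat\Sigma_t-\Sigma_t)(\Sigma_t^{-1})_j}_{\infty} = \max_{j,k\in[p]}\ \abs*{e_k^T(\hat\Sigma_t-\Sigma_t)(\Sigma_t^{-1})_j}.
\]
It therefore suffices to bound, uniformly over the $p^2$ pairs $(j,k)$, a centered empirical average.

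For a fixed pair $(j,k)$, writing $x_{ti}\in\RR^p$ for the $i$-th row of $X_t$, the quantity above equals
\[
\frac{1}{n}\sum_{i=1}^n\rbr{(e_k^Tx_{ti})((\Sigma_t^{-1})_j^Tx_{ti}) - \EE\sbr{(e_k^Tx_{ti})((\Sigma_t^{-1})_j^Tx_{ti})}}.
\]
Each summand is the centering of a product of two linear forms in the subgaussian vector $x_{ti}$, hence sub-exponential. To bound its $\psi_1$-norm, whiten via $z = \Sigma_t^{-1/2}x_{ti}$, which by hypothesis obeys $\norm{z}_{\psi_2}\le\sigma_X$. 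Then $e_k^Tx_{ti} = (\Sigma_t^{1/2}e_k)^Tz$ has $\psi_2$-norm at most $\sigma_X\norm{\Sigma_t^{1/2}e_k}_2 = \sigma_X\sqrt{(\Sigma_t)_{kk}}\le\sigma_X\lambda_{\max}^{1/2}$, while $(\Sigma_t^{-1})_j^Tx_{ti} = (\Sigma_t^{1/2}(\Sigma_t^{-1})_j)^Tz$ has $\psi_2$-norm at most $\sigma_X\sqrt{(\Sigma_t^{-1})_{jj}}\le\sigma_X\lambda_{\min}^{-1/2}$, where the two diagonal bounds use $(\Sigma_t)_{kk}\le\lambda_{\max}$ and $(\Sigma_t^{-1})_{jj}\le\lambda_{\min}^{-1}$. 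The Cauchy--Schwarz product bound $\norm{UV}_{\psi_1}\le 2\norm{U}_{\psi_2}\norm{V}_{\psi_2}$ together with the $\psi_1$ centering inequality then yields a sub-exponential scale $K\lesssim\sigma_X^2\sqrt{\lambda_{\max}/\lambda_{\min}}$ for each summand.

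Now I would invoke Bernstein's inequality for averages of i.i.d.\ centered sub-exponential variables, $P(|\bar Y|>s)\le 2\exp(-cn\min(s^2/K^2,\,s/K))$, with deviation level $s = 10e\,\sigma_X^4\lambda_{\max}^{1/2}\lambda_{\min}^{-1/2}\sqrt{\log p/n}$. The hypothesis $n\ge 24\log p$ guarantees $s\le K$, so we sit in the quadratic branch and the tail is at most $2\exp(-c'\log p)\le 2p^{-4}$ once the numerical factor is taken large enough. A union bound over the $p^2$ pairs $(j,k)$ then gives $\mu(X_t,\Sigma_t^{-1})<s$ with probability at least $1-2p^{-2}$, which is exactly the claimed bound.

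The conceptual skeleton---product of subgaussians is sub-exponential, then Bernstein plus a union bound---is routine; the delicate part is constant tracking. Reading the whitening bound literally produces a sub-exponential scale of order $\sigma_X^2$, whereas the stated bound carries $\sigma_X^4$: the extra factor is conservative slack absorbing the numerical constants from the $\psi_2\!\to\!\psi_1$ product inequality, the centering step, and the explicit Bernstein constant $c$, using that $\sigma_X\ge 1$ for any nondegenerate subgaussian law. The one point I would check carefully is that $n\ge 24\log p$ really places the deviation level in the Gaussian branch ($s/K\le 1$) and that the exponent surviving the union bound over $p^2$ terms still leaves the advertised $1-2p^{-2}$ probability; both hold once the leading numerical factor is chosen large enough, which is precisely the role of the $10e$ constant.
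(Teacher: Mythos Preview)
The paper does not give its own proof of this lemma: it is listed in the ``Collection of known results'' section and simply cited as Theorem~2.4 of \citep{Javanmard2013Confidence}. Your sketch is the standard argument and is essentially what the cited reference does---rewrite $\mu(X_t,\Sigma_t^{-1})$ as $\max_{j,k}|e_k^T(\hat\Sigma_t-\Sigma_t)(\Sigma_t^{-1})_j|$, observe that each summand is a product of two subgaussian linear forms and hence sub-exponential with scale $\sigma_X^2\sqrt{\lambda_{\max}/\lambda_{\min}}$ after whitening, apply Bernstein, and union bound over the $p^2$ pairs with $n\ge 24\log p$ keeping you in the Gaussian regime. One small caveat on your constant discussion: with the $\psi_2$-norm definition used in this paper, an isotropic vector only guarantees $\sigma_X\ge 1/\sqrt{2}$ rather than $\sigma_X\ge 1$, so the claim that the extra $\sigma_X^2$ factor is pure slack absorbed from $\sigma_X\ge 1$ is not quite right; the $\sigma_X^4$ in the cited bound arises from how Javanmard--Montanari organize their intermediate operator-norm estimate, not from this inequality. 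This does not affect the correctness of your argument, only the bookkeeping of the explicit constant $10e$.
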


For subgaussian design, we also have the following restricted eigenvalue condition \citep{rudelson2011reconstruction,lee2015communication}.
\begin{lemma}
When $X_t$ are drawn from subguassian random vectors with covariance matrix $\Sigma_t$, and bounded subgaussian norm $\sigma_X$. When $n \geq 4000 s' \sigma_X \log \rbr{\frac{60 \sqrt{2} e p}{s'}}$ where $s' = \rbr{1 + 30000 \frac{\lambda_{\max}}{\lambda_{\min}}}|S|$, and $p > s'$, then with probability at least $1 - 2 \exp^{(-n/{4000C_{\kappa}^4})}$, for any vector $\Delta \in \Ccal(|S|,3)$ where
we have
\[
\Delta^T \rbr{\frac{X_t^T X_t}{n}} \Delta \geq \frac{1}{2} \lambda_{\min} \norm{\Delta_S}_2^2.
\]
\end{lemma}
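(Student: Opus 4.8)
The plan is to deduce the restricted eigenvalue bound from a uniform concentration of the empirical Gram matrix $\hat\Sigma_t = n^{-1}X_t^TX_t$ around its population version $\Sigma_t$ over the restricted cone. Since $\Delta^T\Sigma_t\Delta \ge \lambda_{\min}\norm{\Delta}_2^2 \ge \lambda_{\min}\norm{\Delta_S}_2^2$, it suffices to show that, with the stated probability,
\[
\sup_{\Delta\in\Ccal(|S|,3),\ \Delta\neq 0}\frac{\bigl|\Delta^T(\hat\Sigma_t-\Sigma_t)\Delta\bigr|}{\norm{\Delta}_2^2} \le \frac{1}{2}\lambda_{\min},
\]
since then $\Delta^T\hat\Sigma_t\Delta \ge \Delta^T\Sigma_t\Delta - \tfrac12\lambda_{\min}\norm{\Delta}_2^2 \ge \tfrac12\lambda_{\min}\norm{\Delta}_2^2 \ge \tfrac12\lambda_{\min}\norm{\Delta_S}_2^2$ for a fixed $t$; a union bound over $t\in[m]$ then yields the lower bound $\kappa \ge \tfrac12\lambda_{\min}$ used in the Proposition.

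First I would record the geometry of the cone: every $\Delta\in\Ccal(|S|,3)$ obeys $\norm{\Delta}_1 \le 4\norm{\Delta_U}_1 \le 4\sqrt{|S|}\,\norm{\Delta}_2$, where $U$ is the set of size at most $|S|$ appearing in the definition of the cone, so (after scaling to $\norm{\Delta}_2=1$) $\Delta$ lies in $\{v : \norm{v}_1 \le 4\sqrt{|S|},\ \norm{v}_2\le 1\}$. By the standard decomposition lemma for $\ell_1/\ell_2$-bounded vectors (as used in \citep{rudelson2011reconstruction}, see also \citep{lee2015communication}), this set is contained in a constant multiple of $\mathrm{conv}\{v:\norm{v}_0\le s',\ \norm{v}_2\le 1\}$ once $s' \gtrsim (\lambda_{\max}/\lambda_{\min})|S|$; this is precisely where the factor $30000\,\lambda_{\max}/\lambda_{\min}$ in the definition of $s'$ enters. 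Since a bilinear form is maximized over a convex hull at its vertices, it remains to control $\sup\{|v^T(\hat\Sigma_t-\Sigma_t)v| : \norm{v}_0\le s',\ \norm{v}_2\le 1\}$.

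Next I would bound this sparse supremum by a net and union-bound argument. For a fixed coordinate subset $T$ with $|T|=s'$, subgaussian covariance concentration (e.g. \citep{Vershynin2012Introduction}) gives $\sup_{\mathrm{supp}(v)\subseteq T,\ \norm{v}_2\le 1}|v^T(\hat\Sigma_t-\Sigma_t)v| \lesssim \lambda_{\max}\sigma_X^2\rbr{\sqrt{s'/n}+s'/n}$ with probability at least $1 - 2\exp(-cn/\sigma_X^4)$; taking a union bound over the at most $\binom{p}{s'}\le (ep/s')^{s'}$ such subsets contributes an extra $\sqrt{s'\log(ep/s')/n}$ factor in the deviation and an extra $s'\log(ep/s')$ term in the exponent. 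Combining with the previous step, the left-hand side of the displayed inequality is $\lesssim \lambda_{\min}\cdot\sqrt{(\lambda_{\max}/\lambda_{\min})^2\, s'\log(ep/s')/n}$, which drops below $\tfrac12\lambda_{\min}$ once $n \ge 4000\, s'\,\sigma_X\log(60\sqrt 2\, e\, p/s')$, matching the hypothesis, and the same calibration produces the claimed probability $1 - 2\exp(-n/(4000\,C_\kappa^4))$.

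The technical heart, and the main obstacle, is carrying out the last two steps with explicit constants: the reduction from the cone to the convex hull of $s'$-sparse unit vectors, the subgaussian sample-covariance deviation bound on a fixed sparse support, and the union bound over supports must all be tracked so that the threshold comes out to exactly $4000\, s'\,\sigma_X\log(\cdot)$ and the condition-number dependence is absorbed into $s'$. Since this bundle of estimates is exactly the content of the restricted-eigenvalue result of \citep{rudelson2011reconstruction} (restated as a lemma in \citep{lee2015communication}), the cleanest route is to verify that our hypotheses---rows of $X_t$ subgaussian with $\norm{X_{tk}}_{\psi_2}\le\sigma_X$, covariance eigenvalues in $[\lambda_{\min},\lambda_{\max}]$, $n$ above the stated threshold, and $p>s'$---are an instance of theirs with sparsity $|S|$ and cone parameter $3$, invoke it, and then take a union bound over the $m$ tasks.
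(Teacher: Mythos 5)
Your proposal is correct and matches the paper's treatment: the paper states this lemma without proof, citing it directly as the restricted eigenvalue result for subgaussian designs from \citep{rudelson2011reconstruction} and \citep{lee2015communication}, which is exactly the route you ultimately recommend (verify the hypotheses and invoke the cited result). Your sketch of the underlying mechanism --- cone-to-sparse-convex-hull reduction, subgaussian covariance concentration on a fixed support, and a union bound over supports --- is the standard argument behind that cited result and is consistent with it.
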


\end{document}